\newcommand{\BB}[1]{\mathbb{#1}}
\newcommand{\BF}[1]{\mathbf{#1}}
\newcommand{\BC}[1]{\mathcal{#1}}
\newcommand{\proj}[2]{\pi_{#1}(#2)}
\newcommand{\das}[1]{\textcolor{black}{#1}}
\newcommand{\highlight}[1]{\textcolor{black}{#1}}
\newcommand{\Mod}[1]{\ensuremath{\ (\mathrm{mod}\ #1)}}
\newtheorem{theorem}{Theorem}[section]
\newtheorem{lemma}{Lemma}
\newtheorem{assumption}{Assumption}
\newcommand{\Lc}{{\mathcal{L}}}
\newcommand{\pdj}{{\nabla_{(j)}}}
\newcommand{\gm}{\tilde{\bm{\theta}}}
\newcommand{\gmj}{\tilde{\bm{\theta}}_{(j)}}
\newcommand{\tjk}{\bm{\theta}_{k,j}}
\newcommand{\ykj}{\bm{y}_{k,j}}
\newcommand{\tjminus}{\bm{\theta}_{-j}^{t_0}}
\newcommand{\gkj}{ g_{k,j}}
\newcommand{\zb}{\zeta}
\newcommand{\ex}{\mathbb{E}}
\newcommand{\Gj}{\bm{G}_{(j)}}
\newcommand{\Hj}{\bm{H}_{(j)}}
\newcommand{\squeezeuppicture}{\vspace{-3mm}} 
\newcommand*{\QEDB}{\null\nobreak\hfill\ensuremath{\square}}%
\newcommand{\eqdef}{\overset{\mathrm{def}}{=}}
\newcommand{\ext}{\ex_t}
\title{MULTI-TIER FEDERATED LEARNING FOR VERTICALLY PARTITIONED DATA}
\name{Anirban Das, Stacy Patterson
\thanks{This work is supported by the Rensselaer-IBM AI Research Collaboration (http://airc.rpi.edu), part of the IBM AI Horizons Network (http://ibm.biz/AIHorizons), and by the National Science Foundation under grants CNS 1553340 and CNS 1816307.}
}
\address{ Department of Computer Science\\
Rensselaer Polytechnic Institute, Troy, New York, USA}
\begin{document}
\ninept
\maketitle
\begin{abstract}
We consider decentralized model training in tiered communication networks.
Our network model consists of a set of silos, each holding a vertical partition of the data. Each silo contains a hub and a set of clients, with the silo's vertical data shard partitioned horizontally across its clients.
We propose Tiered Decentralized Coordinate Descent (TDCD), a communication-efficient decentralized training algorithm for such two-tiered networks. 
To reduce communication overhead, the clients in each silo perform multiple local gradient steps before sharing updates with their hub. 
Each hub adjusts its coordinates by averaging its workers' updates, and then hubs exchange intermediate updates with one another. 
We  present a theoretical analysis of our algorithm and show the dependence of the convergence rate on the number of vertical partitions, the number of local updates, and the number of clients in each hub. 
We further validate our approach empirically via simulation-based experiments using a variety of datasets and both convex and non-convex objectives.
\end{abstract}
\begin{keywords}
vertical machine learning, coordinate descent, federated learning, stochastic gradient descent
\end{keywords}
%

\section{Introduction}
\label{sec:intro}

In recent times, we have seen an exponential increase of data produced at the edge of the communication networks. In many settings, it is infeasible to transfer the entire dataset to a centralized cloud for downstream analysis, either due to practical constraints such as high communication cost or latency, or to maintain user privacy and security~\cite{kairouz2019advances}. This has led to the deployment of distributed machine learning and deep-learning techniques where computation is performed collaboratively by set of clients, each close to its own data source.

Once scenario that arises in distributed training is 
when clients have different sets of features, but there is a sizable overlap in the sample ID space among their datasets~\cite{webank_reviewPaper}. 
For example, the  training dataset may be distributed across silos in a multi-organizational context, for example in healthcare, banking, finance, retail, etc.~\cite{webank_reviewPaper, sun2019privacy}. 
Each silo holds a distinct set of features (e.g., customer/patient list);
the data within each silo may even be of a different modality, for example, one silo may have audio features, whereas another silo has  image data. 
The paradigm of training a global model over such feature-partitioned data is called \emph{vertical federated learning}~\cite{yang2019parallel, liu2019communication}. 
This is different from the more prevalent alternative of \emph{horizontal learning}, where the participating clients each have the entire set of features for a subset of the sample space~\cite{mcmahan2016communication, konevcny2016federated, kairouz2019advances}.

Earlier vertical learning works~\cite{hardy2017private,yang2019parallel,feng2020multi,chen2020vafl} considered a case where each party needs to communicate in each iteration, which may be expensive communication-wise.
To save communication, multiple rounds of training can be performed on a client before reconciling the local model updates into the global model. A more recent work~\cite{liu2019communication} proposed an algorithm that addresses this problem by performing multiple local training iterations before reconciling the client model updates into the global model.
All of these works assume that the entire dataset of a silo is contained in a single client.
However, this model fails to capture the case where the dataset within a silo is horizontally partitioned across multiple clients,
for example, the dataset of a bank may be distributed among its branches, or healthcare data among hospitals in a chain. 

We propose a training algorithm, tiered decentralized coordinate descent (TDCD), for vertical federated learning where there are multiple clients in each silo.
We consider a two tiered network architecture consisting of multiple silos. Each silo holds a vertical partitioning of the data, and internally consists of a hub and multiple clients connected to the hub. 
The data in a silo is further horizontally distributed among its clients. 
Our goal, is to jointly train a model on the features of the data contained across silos, without explicitly sharing raw data from clients, and only via passing intermediate information vectors. \das{TDCD  works by performing a non-trivial combination of parallel coordinate descent on the top tier between silos, and distributed stochastic gradient descent in the bottom tier of clients inside each silo.
To reduce communication, each client performs multiple local gradient steps before sending updates to its hub.} 
This optimization is similar to the method studied in~\cite{mcmahan2016communication, stich2018local,tlifedprox} for horizontal learning. 
We note that some existing works have proposed training algorithms for hierarchical network architectures~\cite{wang2018cooperative, 9054634, castiglia2020multi, 9148862}, but only from the perspective of horizontal learning. Our approach is thus a novel combination of learning with both vertically and horizontally partitioned data in a multi-tiered network.

Specifically, our contributions are the following: 
(1) we present a system model for decentralized learning in a two-tier network, where data is both vertically and horizontally partitioned;
(2) we develop a communication-efficient decentralized learning algorithm, using principles from coordinated descent and stochastic gradient descent;
(3) we analyze the convergence of our proposed algorithm and show how it depends on the number of silos, the number of clients, and the number of local training rounds;
(4) we validate our analysis via experiments using convex and non-convex objectives.

\section{SYSTEM MODEL AND PROBLEM FORMULATION}
\label{sec.system_model}

In this section, we  describe the system architecture, the allocation of the training data, and the loss function we seek to minimize.

\subsection{System Architecture and Training Data}
We consider a decentralized system consisting of $N$ silos, shown Fig.~\ref{fig:system_model}.
Each silo consists of a hub and multiple clients connected to it in a hub-and-spoke fashion. \das{The hub network forms a complete graph. For simplicity, we assume that each silo has $K$ clients.
Our network model thus has two tiers, the top tier of hubs, shown in orange, that communicate with each other,
and the bottom tier of clients in each silo, shown in gray.}

The training data consists of $M$ samples that are common across all silos. Each sample has $D$ features. 
The data is partitioned vertically across the $N$ silos so that each silo owns a disjoint set of  $D_j$ features for all of the $M$ samples.
We can express the entire training dataset by a matrix $\BF{X} \in \BB{R}^{M\times D}$.
We denote set of data, i.e., the columns of $\BF{X}$, held in silo $j$ by $\BF{X}_{(j)}$. 
Within each silo, its data is partitioned horizontally across its clients, so that each client holds some rows of $\BF{X}_{(j)}$.
We denote the horizontal shard of $\BF{X}_{(j)}$ that is held by client $k$ in silo $j$ as $\BF{X}_{k,j}$. 
Lastly, we denote a sample  $i$ of the dataset (single row of $\BF{X}$) as $\BF{X}^{(i)}$, and $\BF{X}^{(i)}_{(j)}$ denotes the features of the $i$th sample corresponding to silo $j$.
We assume that each client stores the sample labels $\BF{y}_{k,j}$ for its data $\BF{X}_{k,j}$.

\begin{figure}[htpb]
\centering
    \includegraphics[width=0.6\linewidth]{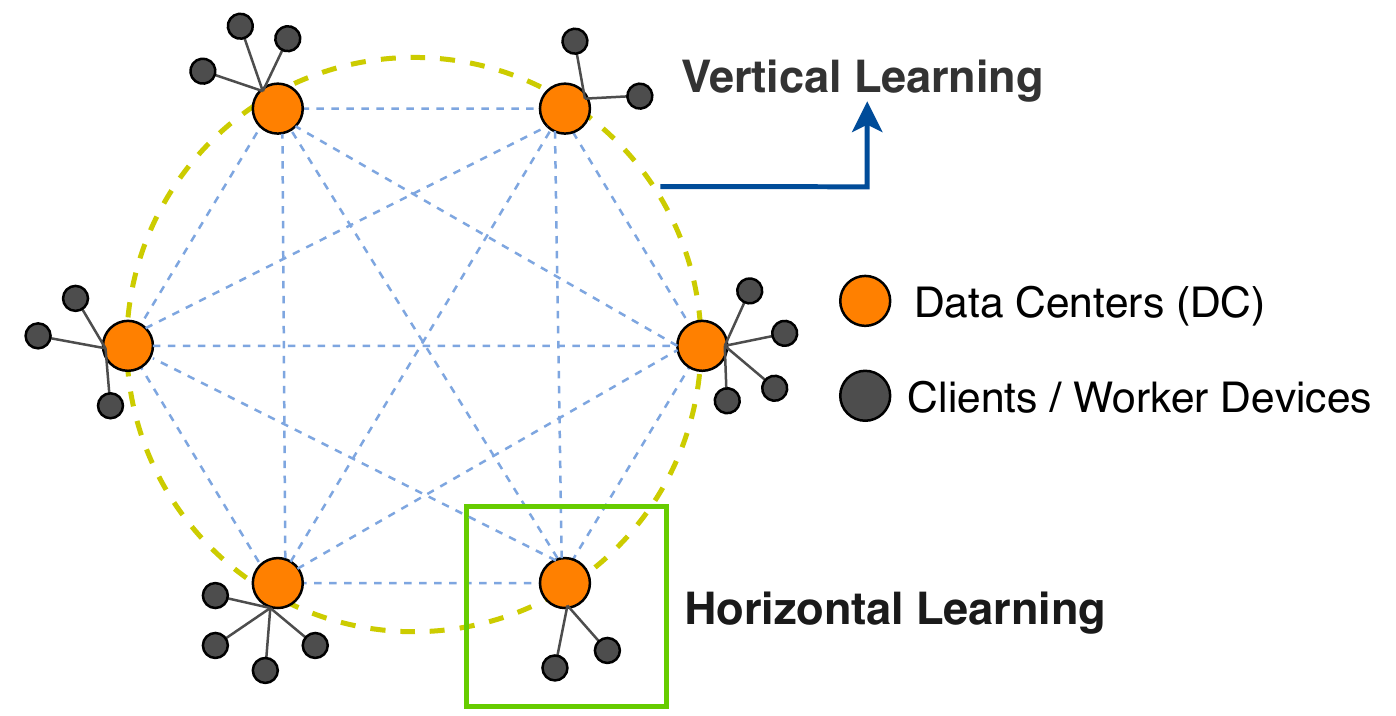}
    \caption{System architecture.} 
    \label{fig:system_model}
\squeezeuppicture
\end{figure}

\subsection{Loss Function}

The objective is to train a global model $\gm$, which is a $d$-vector that can be decomposed as
\[
\gm = [ \gm_{(1)}^T, \ldots, \gm_{(N)}^T]^T
\]
where each $\gmj$ is the block of features, or coordinates, for silo $j$.
 The goal of the training algorithm is to minimize an objective function with following structure:
\begin{align*}
\Lc (\gm,\BF{\BF{X;y}}) \eqdef  \frac{1}{M} \sum\limits_{i=1}^M f(\gm_{(1)}, \ldots, \gm_{(N)}; \BF{X}^{(i)},\BF{y}^{(i)} ) + \lambda \sum_{d=1}^N \omega(\gm_{(d)})
\end{align*}
where $f$ has the partially separable form
\begin{align*}
f(\gm, \BF{X}^{(i)}; \BF{y}^{(i)}) = f \left( \sum\limits_{d=1}^N \BF{X}^{(i)}_{(d)} \gm_{(d)}\;, \BF{y}^{(i)}\right).
\end{align*}
The functions $\omega(\cdot)$ constitute a regularizer, and $\lambda$ is a hyperparameter.  
A concrete example of the loss function is an $L_2$ regularized square loss function for empirical risk minimization:
\begin{align*}
\Lc (\gm, \BF{\BF{X;y}}) &= \frac{1}{2M}||\BF{X}\gm -\BF{y}\|_2^2 + \frac{\| \gm \|_2^2}{2}. 
\end{align*}

\section{Proposed Algorithm}
\label{sec.proposedalgorithm}
In this section, we present our Tiered Decentralized Coordinate Descent algorithm (TDCD). 
The pseudocode is given in Algorithm~\ref{alg:algorithm}. 
We first note that the hubs update their own corresponding blocks of coordinates of $\gmj$ in parallel; no hub has the entire $\gm$.  
We define $\tjk^t \in \BB{R}^{D_j}$ as the local version of the coordinates of the weight vector $\gmj^t$ that each client updates. These local versions are initialized by the clients at iteration $t=0$. 

In iteration 0, and every $Q$th iteration thereafter, 
the hubs first average the models from the clients, where hub $j$, updates the $j$th block coordinates of global weight $\gm^t$ as $\gmj^t=\frac{1}{K} \sum\limits_{k=1}^K \left [\tjk^{t}\right]$.
This step is similar to horizontal federated learning. The hubs then agree on $Q$ minibatches $\{ \zeta^{\tau} \}_{\tau=t}^{t+Q-1}$, each containing $B$ samples randomly drawn from the global dataset $\BF{X}$. 
The hubs communicate the aggregated model and the minibatch information to their clients. The clients, in turn, reply with the intermediate information for the samples IDs in those Q minibatches using the newest aggregated model. 
It is necessary to propagate this intermediate information to allow clients in other hubs to calculate partial derivatives during training.  
We define the \das{\textit{intermediate information}} for the $j$th coordinate block for a single sample $p$ as $\Phi_{(j)}^{(p)} =  \BF{X}_{(j)}^{(p)} \gmj^t$. 
For a single minibatch $\zeta$, each client computes a set of information $\Phi_{k,j}^{\zeta} = \{ \Phi_{(j)}^{(p)}\}_{p \in \zeta}$. 
Each client then sends $Q$ such sets of intermediate information to its hub corresponding to the $Q$ minibatches.  
The hub then stacks the set of updates $\{\Phi_{k,j}^{\zeta}\}$ from each of its clients to form $\Phi_t^j$. 
Each hub $j$ then broadcast $\Phi_t^j$ to other hubs to propagate this information. 
For hub $j$, we denote the intermediate information obtained from other hubs by $\Phi_{-j} = \sum_{l=1, l\neq j}^N \Phi_{j}$.
Once this is done, the hub then applies a projection function for each client $k$ to send the subset of information from $\Phi_{-j}$ relevant to client $k's$ samples to that client. 
Alternatively a hub can send the entire $\Phi_{-j}$ to the client and the client can do the projection itself to extract the rows corresponding to its own samples.
We define a projection function $\pi_{k,j}$ such that $\proj{k,j}{\Phi_{-j}} = \Phi_{-k,j}$, where $\Phi_{-k,j}$ is the extracted relevant information for client $k$ of silo $j$.
 
After receiving this intermediate information, at each iteration $t$ each client $k$ of silo $j$ can now calculate its own local partial derivatives of $\Lc$ with respect to coordinate block $j$.
This is denoted by $\gkj$ and is a function of $\Phi_{-k,j}$, the part of $\BF{X}_{k,j}$ in minibatch $\zeta^{t}$, and the local set of weights $\tjk$.
Each client executes $Q$ local stochastic gradient steps, on the features for their respective silos, using a different minibatch in each iteration:
\begin{align}
\tjk^{t+1}= \tjk^{t} - \eta g_{k,j}(\Phi_{-k,j}^{t_0},\tjk^t;\zeta^{\tau}).
\end{align}
$\eta$ is the step size (learning rate), and 
$t_0$ represents the most recent iteration $t_0 < t$ in which the client received intermediate information from its hub.
The entire process is repeated until convergence. 

Informally, each silo effectively takes an approximate (stochastic) gradient step towards the minimizer of $\Lc (\gm^t)$ along the direction of the its coordinates every $Q$ iterations.

\begin{algorithm}[t]
	\caption{Tiered Decentralized Coordinate Descent (TDCD)}
	\begin{algorithmic}[1]
		\State Initialize $\tjk^t= \tjk^{init} \in \BB{R}^{D_j}~,\forall k,j$ 
		\For {$t = 0, \ldots, \infty$}
			\If{t\Mod{Q}=0}            
            	\For {$j = 1, \ldots, N$ silos \emph{in parallel}} \label{line:decen_start}
					\State Hub $j$ computes $\gmj^t = \frac{1}{K} \sum_{k=1}^K \tjk^t$\label{line:federated_avg}
                	\State Randomly sample $Q$ minibatches $\{\zeta^{\tau}\}_{\tau=t+1}^{t+Q}$ \label{line.pickminibatch}            	
            		\For {$k = 1,\ldots, K$ clients \emph{in parallel}}
                    	\State Set $\tjk^{t} = \gmj^{t}$ \label{line:fedavg_wts_to_client}
                    	\State Send $\Phi_{k,j}^{\zeta}$ to hub $j$, for each $\zeta \in \{\zeta^{\tau}\}_{\tau=t+1}^{t+Q}$\label{line:updates_from_clients_client}
                	\EndFor
                	\State Hub $j$ stack $\{\Phi_{k,j}^{\zeta}\}, \zeta \in \{\zeta^{\tau}\}_{\tau=t+1}^{T+Q}$ to form $\Phi_{j}^{t}$ \label{line:stack_intermediate}
                	\State All hubs exchange $\Phi_{j}^{t}, \forall j=1,\ldots, N$
                	\State Hub $j$ calculate $\Phi_{-j}^{t} = \sum \Phi_{p}^{t}, \forall p \neq j$
					\State In parallel set $\Phi_{-k,j}^{t_0} = \proj{k,j}{\Phi_{-j}^t}$ in $K$ clients. \label{line.projection}            
                \EndFor \label{line:decen_stop}
            \EndIf
            
            \For {$j= 1,\ldots, N$ silos \emph{in parallel}} \label{line:dc_train_start}
                \For {$k = 1, \ldots, K$ clients \emph{in parallel}} \label{line:GD_start}
            		\State $\tjk^{t+1}= \tjk^{t} - \eta g_{k,j}(\Phi_{-k,j}^{t_0},\tjk^t;\zeta^{\tau})$ \label{line:local_gradient_update}
                \EndFor \label{line:GD_stop}
			\EndFor     
     \EndFor \label{line:dc_train_stop}                
     \end{algorithmic} 
	\label{alg:algorithm}
\end{algorithm}

In TDCD, clients only communicate their local model and intermediate information every $Q$ iterations. This is in contrast to distributed SGD algorithms, where the clients need to sync with a coordinating hub in each iteration. This allows TDCD to save bandwidth by increasing $Q$, especially when the the size of the model is large.  Hubs still need to exchange intermediate information for all $Q$ minibatches, in between local training rounds. However,  sending all information at the beginning of $Q$ iterations, rather than in every iteration,  potentially saves network latency and overhead. 
The significant bandwidth savings comes in the silos themselves, since each hub and its clients only share the models every $Q$ iterations.
As a rough estimate, the intermediate information for a sample ranges from a simple scalar value to a small vector of very few dimensions. Therefore while training models in deep learning, the intermediate information of B minibatches with M samples each would be of the order of a few megabytes or less. Compared to this, the size of the actual model can be in the order of gigabytes.
We explore how $Q$ impacts the convergence of TDCD in the next section.

\das{We note that at any step of training hubs can communicate their slice of the global model with each other to form the entire global model for use in inference purposes.} 

\begin{figure*}[t]
\centering
\begin{subfigure}[b]{.3\linewidth}
      \centering
    \includegraphics[width=0.88\linewidth]{./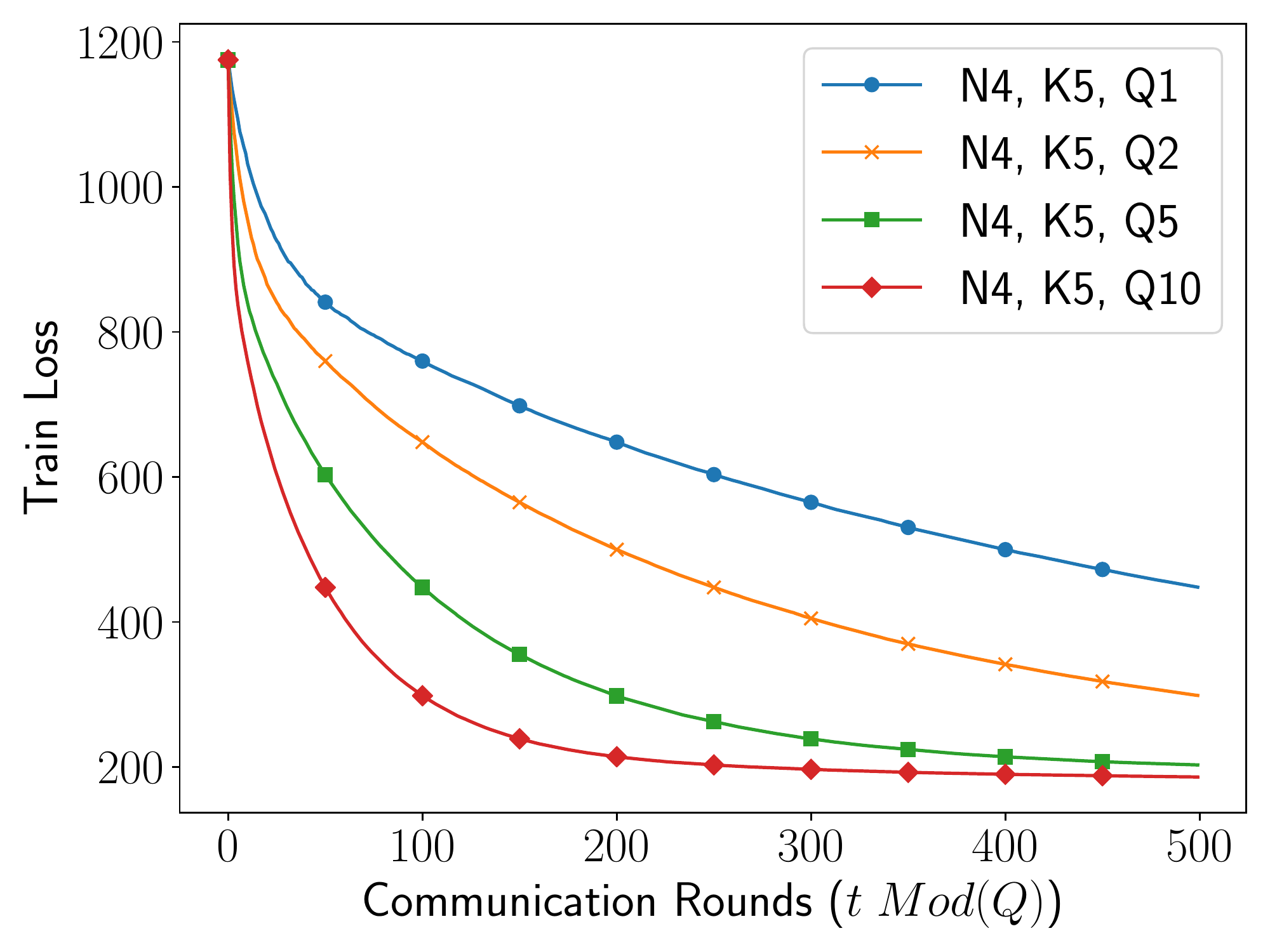}
    \caption{Variation with $Q$}
    \label{fig:ridge_Q_diverge}
\end{subfigure}%
\begin{subfigure}[b]{.3\linewidth}
      \centering
    \includegraphics[width=0.88\linewidth]{./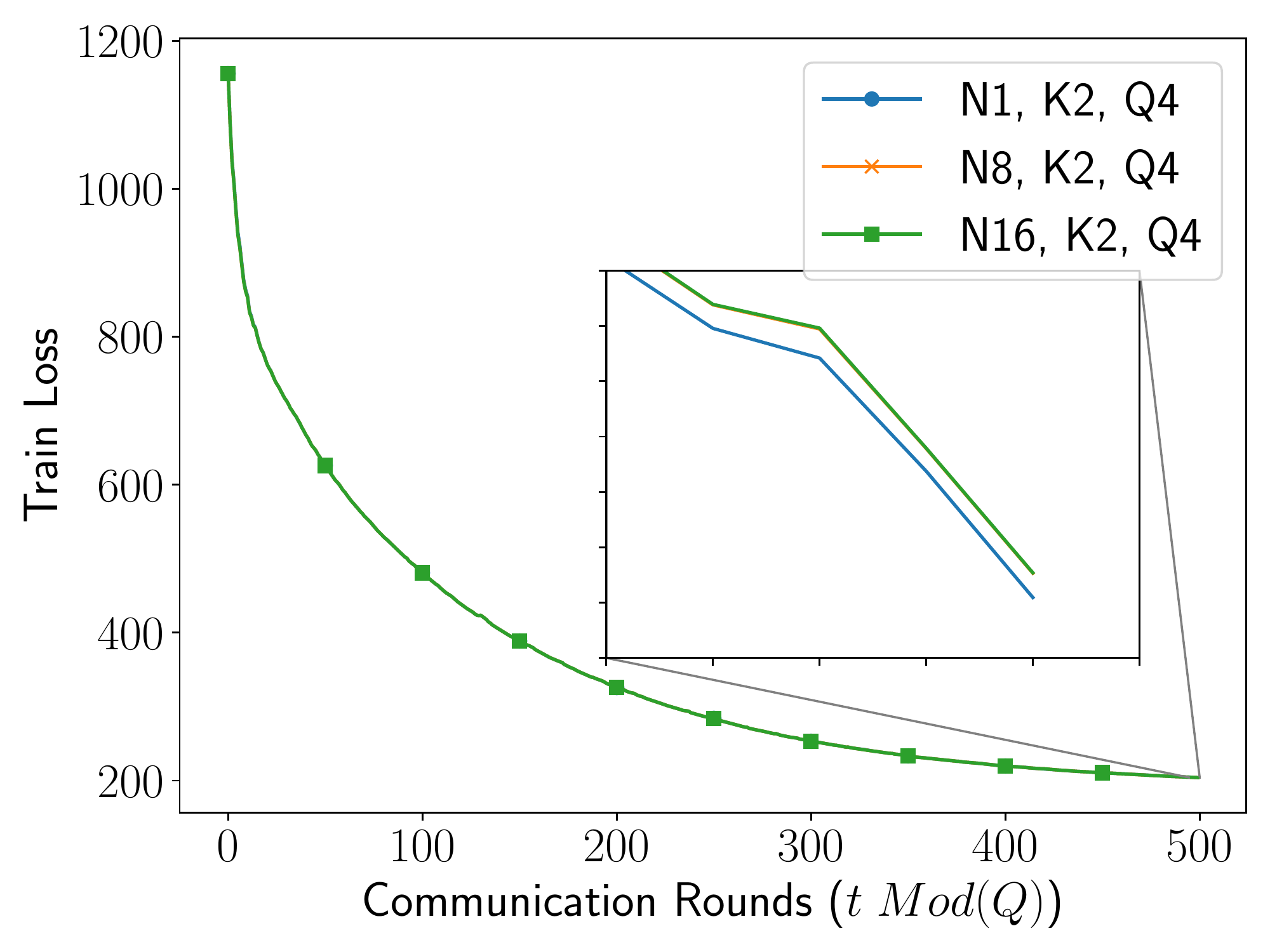}
    \caption{Variation with $N$}
    \label{fig:ridge_N}
     \end{subfigure}
\begin{subfigure}[b]{.3\linewidth}
      \centering%
    \includegraphics[width=0.88\linewidth]{./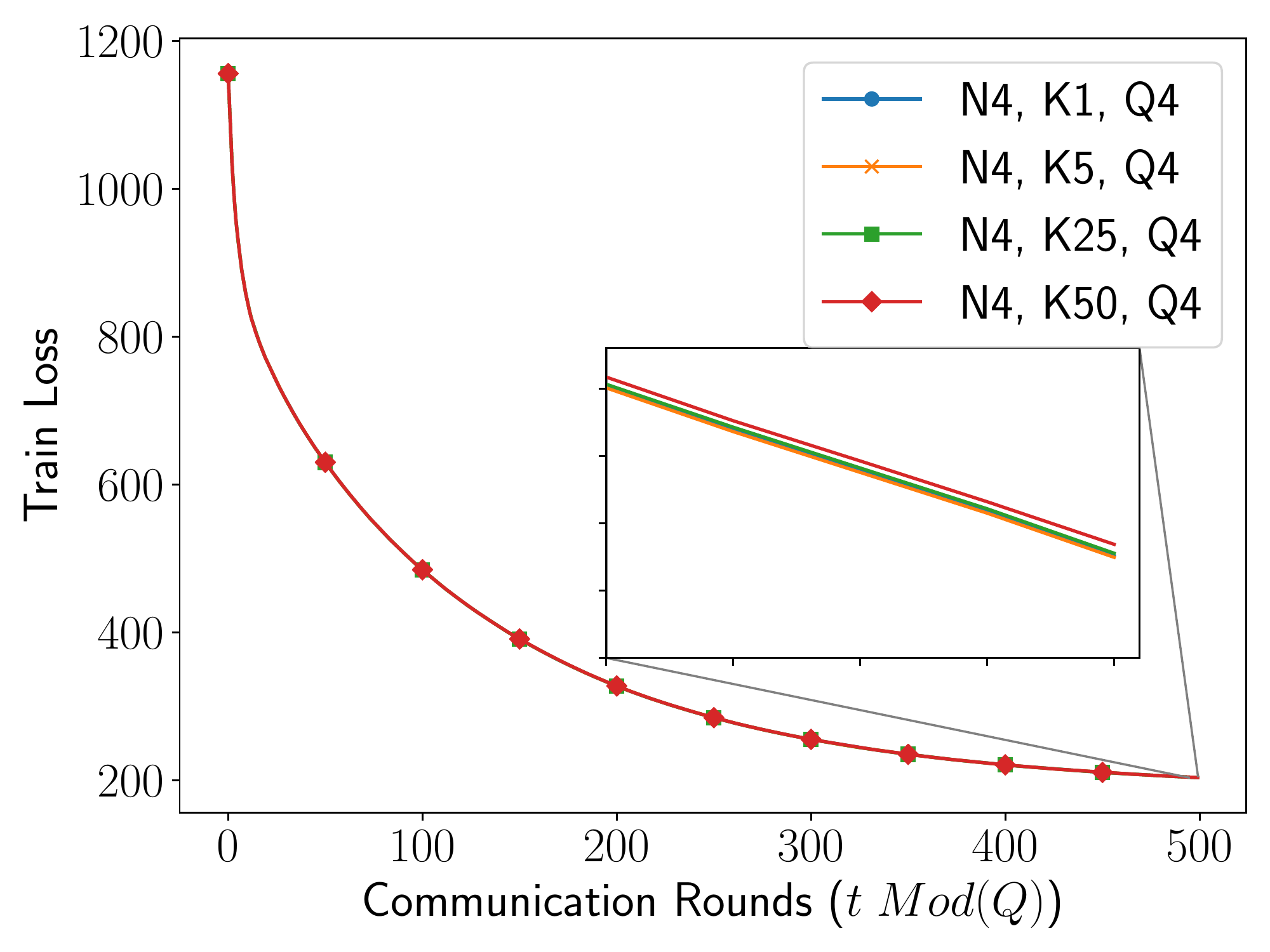}
    \caption{Variation with $K$}
    \label{fig:ridge_K}
     \end{subfigure}
\caption{Ridge Regression Convex Objective. Training loss vs communication rounds for variations of $Q$, $N$ and $K$.}
\label{fig.convex}
\squeezeuppicture
\end{figure*}

\begin{figure}[t]
\centering
\begin{subfigure}[b]{.48\linewidth}
      \centering
    \includegraphics[width=\linewidth]{./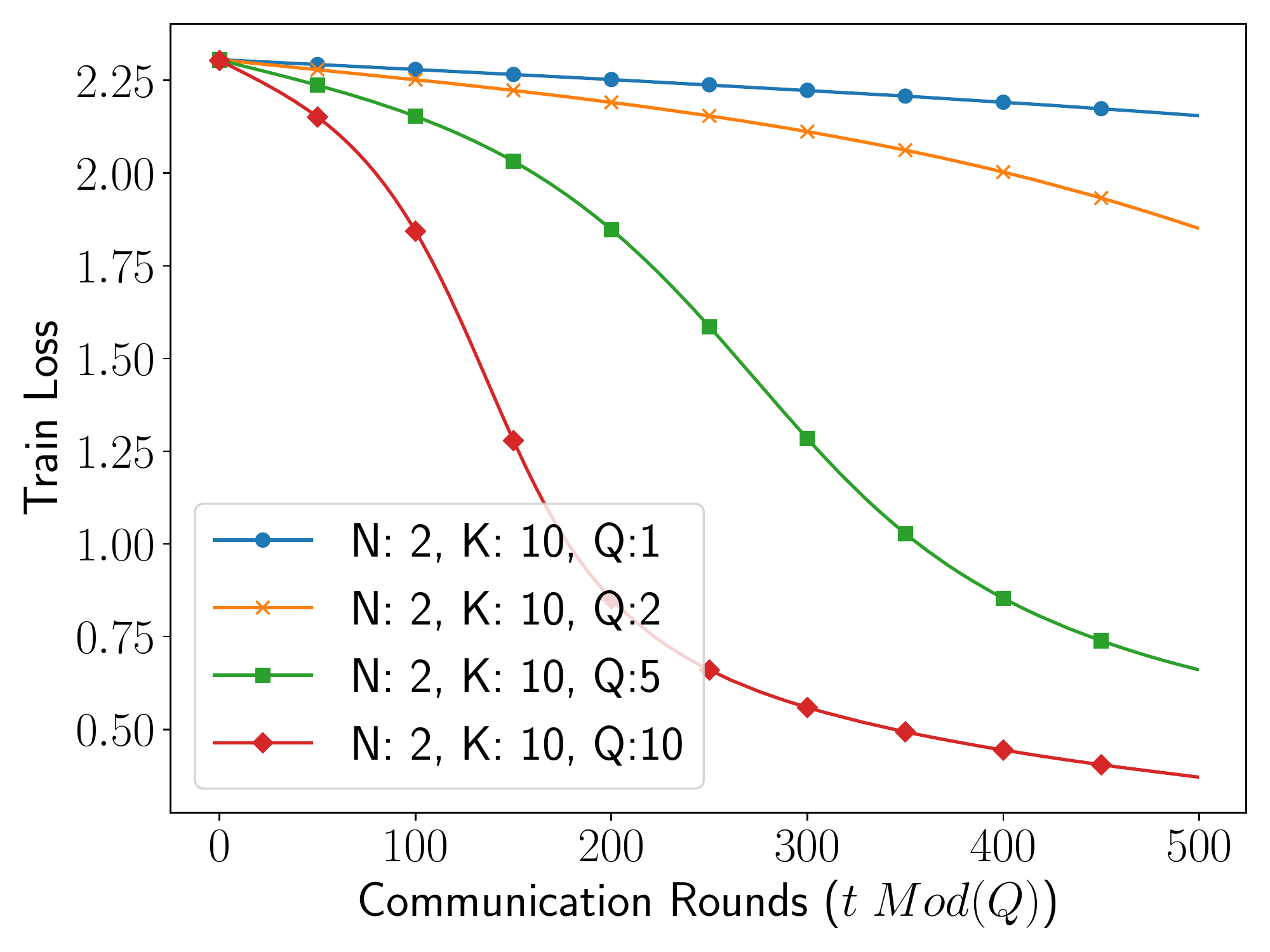}
    \caption{Variation with $Q$}
    \label{fig:comm_vs_Q}
     \end{subfigure}
\begin{subfigure}[b]{.48\linewidth}
      \centering
    \includegraphics[width=\linewidth]{./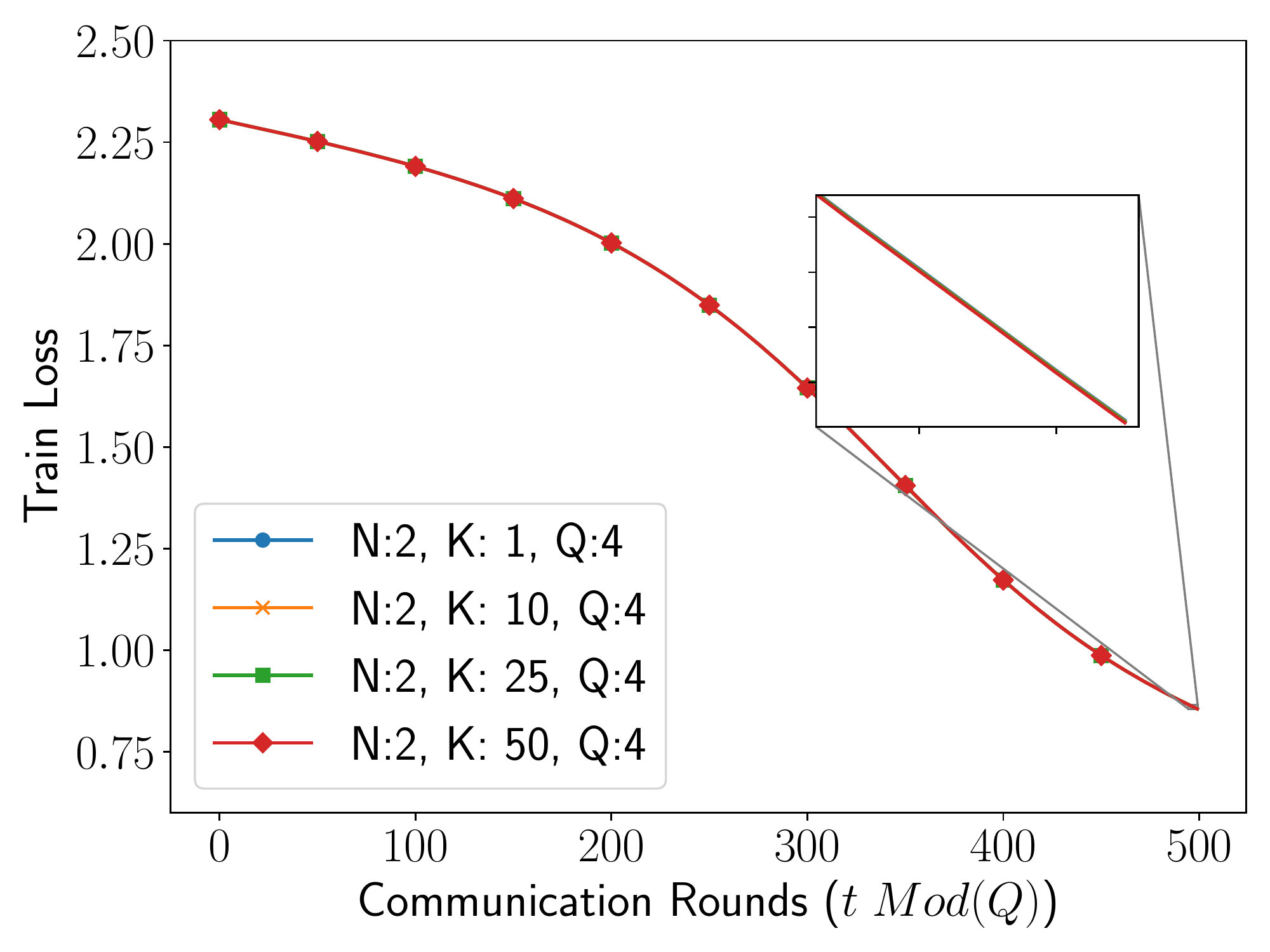}
    \caption{Variation with $K$}
    \label{fig:mnist_k_bsconst}
     \end{subfigure}
\caption{CNN Multi-class classification with Non-Convex Objective. Training loss vs communication rounds for variations of $Q$ and $K$.}
     \label{fig.non-convex}
\squeezeuppicture
\end{figure}
\section{Convergence Analysis}
\label{sec.convanalysis}
In this section, we provide the convergence analysis of the TDCD algorithm. 
Our analysis is based on the evolution of the global model $\gm \in \BB{R}^D$ following Algorithm \ref{alg:algorithm}. It to be noted that the components of $\gm$,  $\gmj$ are realized every $Q$ iterations, but we will study the evolution of a virtual  $\gmj$ at each iteration,  $\gmj^t= \frac{1}{K} \sum_{k=1}^K \tjk^t$.

To facilitate the analysis, we first define the notion of an auxiliary local vector, which represents the local view of the global model at each client. 
Let $y_{k,j}^t$ denote the auxiliary weight vector used by client $k$ in hub $j$ to calculate the partial derivative $g_{k,j}(\ykj^t)$, 
\begin{align}
\ykj^t = [\tjminus, \tjk^t] 
\end{align}
where, $\tjminus$ denotes the vector of all coordinates of $\gm$ excluding block $j$ at iteration $t$, where $t_0$ is the iteration when the client $k$ last updated the value of $\tjminus$ from its hub. 
Therefore, when a client takes multiple local steps to update $\tjk$, it uses a stale value of the elements in the other coordinates of $\ykj$. 

We further define the following two quantities:
\begin{align}
&\bm{G}^t = [(\bm{G}_{(1)}^t)^T, \ldots, (\bm{G}_{(1)}^t)^T]^T \text{ , } \Gj^t =  \frac{1}{K} \sum_{k=1}^K \gkj(\ykj)
\end{align}
We can then write the evolution of the global model as follows, 
\begin{align}
\gm^{t+1} = \gm^t - \eta \bm{G^t} \label{eqn.original}
\end{align}

We make the following assumptions about the loss function $\Lc$ and the gradients $\gkj$ at each client.
\begin{assumption} \label{assumption.lipschitz} 
    The gradient of the loss function is Lipschitz continuous with constant $L$; further, the partial derivative of $\BC{L}$ with respect to each coordinate block $j$ is Lipschitz continuous with constant $L_j$, i.e.,  for all $\bm{\theta}_1, \bm{\theta}_2 \in \mathbb{R}^D$   
   \begin{align}
     &\lVert \nabla \Lc(\bm{\theta}_1) - \nabla \Lc(\bm{\theta}_2) \parallel \leq L\parallel \bm{\theta}_1 - \bm{\theta}_2\rVert \\         
    &\lVert \pdj \Lc(\bm{\theta}_1) - \pdj \Lc(\bm{\theta}_2) \parallel \leq L_j\parallel \bm{\theta}_1 - \bm{\theta}_2\rVert.     
    \end{align}
   \end{assumption}

\begin{assumption}\label{assumption.lowerbound}
The function $\Lc$ is lower bounded so that for all $\bm{\theta} \in \mathbb{R}^D, \Lc(\bm{\theta}) \geq \Lc_{inf}$.
\end{assumption}

\begin{assumption}\label{assumption.unbiased}
Let $\zeta$ be a mini-batch drawn uniformly at random from all samples. 
We assume that the data is distributed so that, for all $\bm{\theta} \in \mathbb{R}^D$
\begin{align}
    &\ex_{\zeta | \bm{\theta}}\left[\gkj(\bm{\theta}) \right]= \pdj \Lc(\bm{\theta})\\
      &\ex_{\zeta | \bm{\theta}} \left[\| \gkj(\bm{\theta}) - \pdj \Lc(\bm{\theta}) \|^2 \right] \leq \sigma^2_j.
\end{align}
\end{assumption}
We also use the following definitions: 
\[
L_{max} = \max_{1\leq j \leq N} L_j ~,~ \sigma_{max} = \max_{1 \leq j \leq N} \sigma_j
\] 

We now provide the main theoretical result of the paper. The proof is deferred to a technical report available in the Appendix~\ref{appendix.A}.
\begin{theorem} \label{thm.main_theorem}
Under Assumptions \ref{assumption.lipschitz}, \ref{assumption.lowerbound}, and \ref{assumption.unbiased}, when the step size $\eta$ satisfies the following condition:
\begin{align}
1 - \eta L - \eta^2 L^2_{max} Q^2 \geq 0
\end{align}
then, for $T>0$, the expected squared norm of the gradient of $\Lc$ averaged over all $T$ iterations satisfies the following bound: 
\begin{align}
\ex \left[\frac{1}{T} \sum\limits_{t=0}^{T-1}\lVert \nabla \Lc(\gm^t) \rVert^2 \right ] 
&\leq \frac{2\left(\Lc(\gm^0) - \Lc_{inf}\right)}{\eta T}   + \frac{\eta L N \sigma_{max}^2}{K} 
\nonumber \\
&~~~~+ L_{max}^2 \eta^2\sigma_{max}^2 Q^2 N^2 \label{eqn.final_convergence_rate}
\end{align}
\end{theorem}

We note that the bound in Theorem~\ref{thm.main_theorem} converges to  a non-zero value as $T\rightarrow \infty$. The convergence error results from the parallel updates on the coordinate blocks (on $N$) 
, staleness due to multiple local iterations (on $Q$) and due to parallel updates based on horizontal partitioning (on $K$) as well.
With an increase in the number of vertical partitions, the error term increases quadratically. The error also depends quadratically on $Q$, however, in practice, if $Q$ is offset by a suitable learning rate $\eta$, then we can leverage multiple local iterations to achieve faster convergence as we will show in Sec.~\ref{sec.expresults}. However, choosing a very small $\eta$ will decrease the convergence error, but it will but increase the first term on the right hand side of (\ref{eqn.final_convergence_rate}), leading to slower convergence.


\section{Experimental Results}
\label{sec.expresults}

We verify the convergence properties of TDCD with respect to the different algorithm parameters of the system via a simulation. 
In our experiments, each client has the same number of samples = $\frac{M}{K}$.

\subsection{Datasets} 
We first briefly discuss the two datasets used in this study.

\textbf{Superconductivity (Convex Objective: Ridge Regression):} For the first experiments, we use the Superconductivity dataset~\cite{hamidieh2018data}, which consists of numerical values in all coordinates. The goal is to predict the critical temperature of superconducting materials. 
We standardized the dataset before using it by normalizing each coordinate to have zero mean and unit variance. We use $20,000$ samples from the original dataset for training. We use all 81 coordinates and include add one for bias. 

\textbf{MNIST (Non-Convex Objective: CNN):}
We train a CNN model on the MNIST dataset~\cite{bottou1994comparison}. 
MNIST is a set of $28\times28$ pixels hand-written digits images with 
$60$,$000$ digits in the training set and $10$,$000$ digits in the test set.
We use $N=2$ for all the experiments and divide each MNIST image vertically into two parts ($28\times14$). Each client trains a local CNN model with a shared linear classifier layer at the top that uses cross-entropy loss. The local CNNs have two \textit{conv} layers followed by a $256$ dimension embedding layer which is fed into the final classifier layer. The two feature representations of $\BB{R}^{256}$ are inputs to the classifier layer with $\BB{R}^{512}$ input and $\BB{R}^{10}$ output. We thus train the weights of the final layer via TDCD while also updating the local CNNs in each iteration. 

\subsection{Results}
In all figures $N$ represents the number of silos (vertical partitions of the dataset), and $K$ represents the number of clients in each silo.
In each of the experiments, The training loss is calculated using the global model $\gm$ and the full training data matrix every $Q$ iterations. We call every $Q$th iteration a \emph{communication round} because it is when communication between clients and hubs occur.

We first study the performance of TDCD on the convex case of ridge regression in Fig.~\ref{fig.convex}. We start with the impact of varying the number of local iterations $Q$ on the convergence rate.
We  fix the network configuration to $N$=4 silos and $K$=5 clients per silo, with a minibatch size of $B=100$ and learning rate $\eta=0.001$. The results are shown in Fig.~\ref{fig:ridge_Q_diverge}. We observe that with increasing values of $Q$, the convergence rate improves. 
This is intuitive as the clients can train more with a larger number of local rounds between communications, however, as stated in Theorem~\ref{thm.main_theorem}, this can result in a larger convergence error.
This implies that by increasing 
the number of local iterations at clients, we can improve the overall communication efficiency by reducing the total number of communication rounds required for a given loss. 

In Fig.~\ref{fig:ridge_N}, we show the impact of varying the number of vertical partitions on the convergence rate. To observe results at higher granularity, we use a subset of 2000 samples from the original training dataset. We fix $K$=2, $Q$=4, and $B$=20 for this experiment. We observe that the effect of increasing $N$ is observable but not very strong. The inset figure shows the last five communication rounds, and we observe that the convergence rate improves with lower value of $N$, which is as per Theorem~\ref{thm.main_theorem}. 

We next study how the number of workers in a silo effects the convergence rate. The results are shown in Fig.~\ref{fig:ridge_K}. We fix $N$=4, and $Q$=4 and $B=500$. Further, we use the same 2000 data points as in the previous experiment. The inset figure here also shows the last five communication rounds of training. We observe that variation of convergence rate is low with varying $K$. This shows that $K$ does not play a large role as $Q$ in its effect on the convergence rate or convergence error. 

Finally, we study the performance of TDCD with the non-convex objective.
We fix the number of silos at $N$=2 and the learning rate $\eta=0.001$ for all experiments. 
We first investigate the impact of $Q$ on the convergence rate and error. The results are shown in Fig.~\ref{fig:comm_vs_Q}. Here, $K$=10 and $B=640$. 
We observe that the convergence rate improves radically for larger values of $Q$. 
This result is similar to what we obtained from the convex case. Hence, by choosing $Q$ carefully it is possible to significantly decrease the communication cost without losing performance. 
Lastly, in Fig.~\ref{fig:mnist_k_bsconst}, we explore the effect of varying the number of clients at each silo. We fix the product of $K$ and $B$  to 1250 across the experiments, so that each silo effectively trains on the same number of samples in each experiment. Similar to the convex case, we again observe that the effect of $K$ is very mild. 
Overall, we we observe that TDCD performs well with both convex and non-convex objectives.

\section{CONCLUSION}
\label{sec:conclusion}
We have introduced TDCD, a communication efficient decentralized algorithm for a multi tier network model with both horizontally and vertically partitioned data. We provided theoretical analysis of the algorithm convergence and its dependence on the number of vertical partitions, the number of clients in each hub, and the number of local iterations. Finally, we presented experimental results to show convergence of our algorithm in practice. In future work, we plan to explore the possibility of hubs communicating with each other asynchronously to share information.

%

\bibliographystyle{IEEEbib}
\bibliography{ref}
\clearpage
\appendix
\onecolumn

\title{Supplementary: Multi-Tier Federated Learning for Vertically Partitioned Data}
\date{}

\maketitle

\section{Proof of the theorem and supporting lemmas} \label{appendix.A}
In this section provide the proofs of our theorem for convergence and the associated helping lemmas. We are omitting the details about how the data is distributed in the clients. It is same as in the main paper. 

We reiterate the objective function of the tiered decentralized coordinate descent approach with periodic averaging. The objective is to train a global model $\gm$, which is a $d$-vector that can be decomposed as
\[
\gm = [ \gm_{(1)}^T, \ldots, \gm_{(N)}^T]^T
\]
where each $\gmj$ is the block of features, or coordinates, for silo $j$.
The goal of the training algorithm is to minimize an objective function with following structure
\begin{align*}
\Lc (\gm,\BF{\BF{X;y}}) \eqdef  \frac{1}{M} \sum\limits_{i=1}^M f(\gm_{(1)}, \ldots, \gm_{(N)}; \BF{X}^{(i)},\BF{y}^{(i)} ) + \lambda \sum_{d=1}^N \omega(\gm_{(d)})
\end{align*}
where $f$ has the partially separable form
\begin{align*}
f(\gm, \BF{X}^{(i)}; \BF{y}^{(i)}) = f \left( \sum\limits_{d=1}^N \BF{X}^{(i)}_{(d)} \gm_{(d)}\;, \BF{y}^{(i)}\right).
\end{align*}
The functions $\omega(\cdot)$ constitute a regularizer, and $\lambda$ is a hyperparameter.  
A concrete example of the loss function is an $L_2$ regularized square loss function for empirical risk minimization:
\begin{align*}
\Lc (\gm, \BF{\BF{X;y}}) &= \frac{1}{2M}||\BF{X}\gm -\BF{y}\|_2^2 + \frac{\| \gm \|_2^2}{2}. 
\end{align*}

\subsection{NOTATION} \label{sec.notation}
We first define the notations to be used in analyzing the convergence of TDCD.

\begin{itemize}
    \item $\gm$ the $D\times 1$ global model. 
    \item $\gmj$ is the $j^{th}$ block of $\gm$, so that $\gm = [ \gm_{(1)}^T, \ldots, \gm_{(N)}^T]^T$. Note that $\gmj$ is a virtual vector. It is realized at a hub $j$ every $Q$ iterations, but we will study the evolution of this virtual vector in every iteration.
    \item $\tjk^t \in \BB{R}^{D_j}$ are the local versions of the coordinates of the weight vector $\gmj^t$ that each client $k$ if hub $j$ updates. 
    \item $\bm{\gm}_{-j}^t$ is the vector of all coordinates in $\gm$, excluding block $j$, at iteration $t$.
    \item $\tjk$ is the local copy of $\gmj$ at client $k$ in silo $j$, so that
        $\gmj = \frac{1}{K} \sum_{k=1}^K \tjk$.
    \item $\ykj^t = [\tjminus, \tjk^t]$ is the parameter vector that client $j$ in silo $k$ 
     at iteration $t$, where $t_0$ is the iteration that client $k$ last updated $\tjminus$.
    \item $\gkj( \ykj; \zb)$ is the partial derivative of $\Lc$ with respect to coordinate block $j$,
    computed at client $k$ in silo $j$ using the coordinates and rows at client $k$ corresponding to minibatch $\zeta$. For simplicity, we will write $\gkj(\ykj)$ when it is clear which minibatch is used.
    \item $\bm{G}^t = [(\bm{G}_{(1)}^t)^T, \ldots, (\bm{G}_{(N)}^t)^T]^T$, where $\Gj^t =  \frac{1}{K} \sum_{k=1}^K \gkj(\ykj)$
    \item $\bm{H}^t = [(\bm{H}_{(1)}^t)^T, \ldots, (\bm{H}_{(N)}^t)^T]^T$, where $\Hj^t = \frac{1}{K} \sum_{k=1}^K \pdj \Lc (\ykj)$
    \end{itemize}
Further, for any vector $\bm{m} \in \mathbb{R}^d$, $\bm{m}_{(l)}$ denotes the $l$th block corresponding to the $l$th silo in vector $\bm{m}$.

It to be noted that components on $\gm$ i.e. $\gmj$ are realized every $Q$ iterations when the hubs communicate with clients and with other hubs, but we will study the evolution of these virtual vectors at each iteration. 
Therefore, based on the above definitions, assumptions and the TDCD algorithm, we can express the evolution of the virtual global parameter/weight vector in the following form:
\begin{align}
\gm = 
\begin{bmatrix}
\tilde{\bm{\theta}}_{(1)}^t\\
\tilde{\bm{\theta}}_{(2)}^t\\
\vdots \\
\tilde{\bm{\theta}}_{(N)}^t
\end{bmatrix}_{D\times1}
=
\frac{1}{K} \begin{bmatrix}
\sum_{k=1}^K \theta_{k,1}^t\\
\sum_{k=1}^K \theta_{k,2}^t\\
\vdots \\
\sum_{k=1}^K \theta_{k,N}^t
\end{bmatrix}_{D\times1}
\quad ,
\gm^{t+1} = \gm^t -
\frac{\eta}{K} \begin{bmatrix}
\sum_{k=1}^K g_{k,1}(y_{k,1}^t; \zeta^t)\\
\sum_{k=1}^K g_{k,2}(y_{k,2}^t; \zeta^t)\\
\vdots \\
\sum_{k=1}^K g_{k,N}(y_{k,N}^t; \zeta^t)
\end{bmatrix}_{D\times1}\label{eqn.mainequation}
\end{align}

In this case, we update all coordinates of the global weight vector $\gm^t$ , virtually at each time step $t$.
we have the virtual gradient at each time instant $t$ as: 
\[
\BF{G^t} \overset{\bigtriangleup}{=}
\frac{1}{K}
\begin{bmatrix}
\sum_{k=1}^K g_{k,1}(y_{k,1}^t; \zeta^t)\\
\sum_{k=1}^K g_{k,2}(y_{k,2}^t; \zeta^t)\\
\vdots \\
\sum_{k=1}^K g_{k,N}(y_{k,N}^t; \zeta^t)
\end{bmatrix}
\]

\subsection{ASSUMPTIONS}
We make the following assumptions about the loss function $\Lc$ and the gradients $\gkj$ at each client.
\begin{assumption} \label{assumption.lipschitz} 
    The gradient of the loss function is Lipschitz continuous with constant $L$; further, the partial derivative of $\BC{L}$ with respect to each coordinate block $j$ is Lipschitz continuous with constant $L_j$, i.e.,  for all $\bm{\theta}_1, \bm{\theta}_2 \in \mathbb{R}^D$   
   \begin{align}
     \lVert \nabla \Lc(\bm{\theta}_1) - \nabla \Lc(\bm{\theta}_2) \parallel &\leq L\parallel \bm{\theta}_1 - \bm{\theta}_2\rVert \\         
    \lVert \pdj \Lc(\bm{\theta}_1) - \pdj \Lc(\bm{\theta}_2) \parallel &\leq L_j\parallel \bm{\theta}_1 - \bm{\theta}_2\rVert.     
    \end{align}
   \end{assumption}

\begin{assumption}\label{assumption.lowerbound}
The function $\Lc$ is lower bounded so that for all $\bm{\theta} \in \mathbb{R}^D, \Lc(\bm{\theta}) \geq \Lc_{inf}$.
\end{assumption}

\begin{assumption}\label{assumption.unbiased}
Let $\zeta$ be a mini-batch drawn uniformly at random from all samples. 
We assume that the data is distributed so that, for all $\bm{\theta} \in \mathbb{R}^D$
\begin{align}
    &\ex_{\zeta | \bm{\theta}}\left[\gkj(\bm{\theta}) \right]= \pdj \Lc(\bm{\theta})\\
      &\ex_{\zeta | \bm{\theta}} \left[\| \gkj(\bm{\theta}) - \pdj \Lc(\bm{\theta}) \|^2 \right] \leq \sigma^2_j.
\end{align}
\end{assumption}

We also use the following definitions: 
\[
L_{max} = \max_{1\leq j \leq N} L_j ~,~ \sigma_{max} = \max_{1 \leq j \leq N} \sigma_j
\]

\subsection{CONVERGENCE ANALYSIS}
We can write the evolution of the global model from Sec.~\ref{sec.notation} as:
\begin{align}
\gm^{t+1} = \gm^t - \eta \bm{G}^t. \label{eqn.original}
\end{align}
We will study the evolution of this global model.
We will use $\ext$ to denote $\ex_{\zeta^t | \gm^t}$.

\begin{align}
    \ext[\Lc(\gm^{t+1})] - \Lc(\gm^t)
    &\leq  \ext \langle \nabla \Lc(\gm^t), \gm^{t+1} -\gm^t \rangle + \frac{L}{2}~  \ext~ \| \gm^{t+1} -\gm^t \|^2 
    \\ 
    &\leq 
        - \underbrace{\eta ~ \ext\langle \nabla \Lc(\gm^t) , \bm{G}^t \rangle}_{T_1}  
        + \underbrace{\frac{\eta^2 L}{2} ~  \ext ~\| \bm{G}^t \|^2}_{T_2}  \label{t1.eq}
\end{align}

We will use the following lemma to simplify $T_1$.
\begin{lemma} \label{lemma.variance}
\[
\ext\lVert \Gj^t - \Hj^t \rVert^2 \leq \frac{\sigma_{max}^2}{K}
\]
\end{lemma}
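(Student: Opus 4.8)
The plan is to write $\Gj^t - \Hj^t$ as a scaled sum, over the $K$ clients in silo $j$, of the per-client stochastic gradient errors, and then exploit two facts: each error is zero-mean under Assumption~\ref{assumption.unbiased}, and the errors are independent across clients once we condition on the current iterate. Concretely, I would introduce for each client $k$ the error term $\xi_k \eqdef \gkj(\ykj) - \pdj \Lc(\ykj)$, so that by the definitions of $\Gj^t$ and $\Hj^t$,
\[
\Gj^t - \Hj^t = \frac{1}{K}\sum_{k=1}^K \xi_k.
\]
The key observation is that, conditioned on $\gm^t$, each local parameter vector $\ykj = [\tjminus, \tjk^t]$ is fully determined by the history (it depends only on past iterates and past minibatches), hence it is deterministic under $\ext$. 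Applying Assumption~\ref{assumption.unbiased} at the point $\ykj$ then gives $\ext[\xi_k] = 0$ and $\ext\lVert \xi_k \rVert^2 \leq \sigma_j^2$.

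Next I would expand the squared norm of the average into diagonal and off-diagonal contributions:
\[
\ext\Bigl\lVert \tfrac{1}{K}\textstyle\sum_{k} \xi_k \Bigr\rVert^2
= \frac{1}{K^2}\sum_{k=1}^K \ext\lVert \xi_k \rVert^2
+ \frac{1}{K^2}\sum_{k \neq k'} \ext\langle \xi_k, \xi_{k'}\rangle.
\]
Each diagonal term is bounded by $\sigma_j^2 \leq \sigma_{max}^2$, so the diagonal sum contributes at most $\tfrac{1}{K^2}\cdot K\,\sigma_{max}^2 = \tfrac{\sigma_{max}^2}{K}$, which is exactly the target bound. It therefore remains to show the cross-terms vanish.

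The main obstacle is precisely justifying that the off-diagonal terms are zero, since this is what delivers the factor $1/K$ rather than a weaker $\sigma_{max}^2$ bound (which would follow trivially from convexity of $\lVert\cdot\rVert^2$). The argument is that the clients within silo $j$ draw their minibatches independently from disjoint sets of rows, so that, conditioned on $\gm^t$, the errors $\xi_k$ and $\xi_{k'}$ are independent for $k \neq k'$. Combined with the zero-mean property, this yields $\ext\langle \xi_k, \xi_{k'}\rangle = \langle \ext[\xi_k], \ext[\xi_{k'}]\rangle = 0$. Assembling the diagonal bound with the vanishing cross-terms completes the proof. The delicate point to state carefully is the conditional-independence claim: one must verify that the only source of randomness entering $\xi_k$ given $\gm^t$ is client $k$'s own minibatch, and that these are sampled independently across $k$, which is where the data-partitioning model in the main paper is invoked.
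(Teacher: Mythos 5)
Your proposal is correct and follows essentially the same route as the paper: decompose $\Gj^t - \Hj^t$ into an average of per-client error terms, bound the $K$ diagonal terms by $\sigma_j^2 \leq \sigma_{max}^2$ via Assumption~\ref{assumption.unbiased}, and kill the cross terms by combining zero-mean errors with independence across clients conditioned on $\gm^t$ (the paper performs this last step by pulling $\ext$ inside each factor of the cross-term inner products, which is exactly the conditional-independence claim you flag and justify more explicitly).
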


\begin{proof}
\begin{align}
&\ext \lVert \Gj^t - \Hj^t \rVert^2
\\
&= \ext \left\| \frac{1}{K} \sum\limits_{k=1}^K \gkj(\ykj^t) -  \frac{1}{K} \sum\limits_{k=1}^K \pdj\Lc (\ykj^t)  \right\|^2
\\
&= \frac{1}{K^2} \ext \left [ \sum\limits_{k=1}^K  \lVert \gkj(\ykj^t)  - \pdj \Lc(\ykj^t)  \rVert^2 
        + \sum\limits_{k=1}^K \sum\limits_{l=1, l\neq k}^K \langle \gkj(\ykj^t) 
        - \pdj\Lc(\ykj^t), g_{l,j}(\ykj)^t) - \pdj \Lc(\bm{y}_{l,j}^t) \rangle \right]
\\
&{=} \frac{1}{K^2}  \sum\limits_{k=1}^K \ext  \lVert \gkj(\ykj)^t) - \pdj \Lc(\ykj^t)  \rVert^2 
\\  
    &~~~+ \frac{1}{K^2} \sum\limits_{k=1}^K \sum\limits_{l=1, l\neq k}^K 
    \langle \ext\left[\gkj(\ykj^t) - \nabla_{(j)} \Lc(\ykj^t)\right], \ext\left[g_{l,j}(\bm{y}_{l,j}^t) - \nabla_{(j)} \Lc (\bm{y}_{l,j}^t)\right] \rangle \label{eqn.cross_term_zero}
\end{align}
 Applying Assumption~\ref{assumption.unbiased} to (\ref{eqn.cross_term_zero}), we observe that  we can bound the variance in the first sum and that the cross terms in the double summation evaluate to zero. We therefore have the following:
\begin{align}
\ex_{\zeta^t | \gm^t} \lVert \Gj^t - \Hj^t \rVert^2 
&\leq  \frac{1}{K^2} \sum\limits_{k=1}^K  \sigma_{j}^2 \\
&\leq  \frac{\sigma_{j}^2}{K} \\
&\leq  \frac{\sigma_{max}^2}{K}.
\end{align}
\end{proof}

\begin{lemma} \label{lemma.two}
\begin{align}
\ext \parallel\bm{G}^t\parallel^2 
\leq \frac{N \sigma_{max}^2}{K}  
    + \frac{1}{K} \sum\limits_{j=1}^N \sum\limits_{k=1}^K \lVert  \nabla_{(j)} \Lc(\ykj^t) \rVert^2
\end{align}
\end{lemma}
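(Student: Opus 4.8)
The plan is to exploit the block structure of $\bm{G}^t$ and, within each block, to combine the standard second-moment (bias--variance) decomposition with Lemma~\ref{lemma.variance}. Since $\bm{G}^t$ is the vertical stacking of the blocks $\Gj^t$, its squared Euclidean norm splits across silos with no cross terms, so $\ext\|\bm{G}^t\|^2 = \sum_{j=1}^N \ext\|\Gj^t\|^2$. This reduces the task to bounding $\ext\|\Gj^t\|^2$ for a single block $j$ and then summing.

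For a fixed block I would first note that $\Hj^t$ is deterministic given $\gm^t$ (each $\ykj^t$ is a function of the current iterate and past history, not of the fresh minibatch $\zeta^t$), and that $\Gj^t$ is an unbiased estimate of $\Hj^t$: by Assumption~\ref{assumption.unbiased}, $\ext[\gkj(\ykj^t)] = \nabla_{(j)}\Lc(\ykj^t)$ for each $k$, hence $\ext[\Gj^t] = \Hj^t$. This lets me write $\ext\|\Gj^t\|^2 = \ext\|\Gj^t - \Hj^t\|^2 + \|\Hj^t\|^2$, the cross term vanishing precisely because $\ext[\Gj^t - \Hj^t] = 0$ while $\Hj^t$ is constant under $\ext$.

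The first term is handled directly by Lemma~\ref{lemma.variance}, giving $\ext\|\Gj^t - \Hj^t\|^2 \le \sigma_{max}^2/K$. For the second term I would expand $\Hj^t = \frac{1}{K}\sum_{k=1}^K \nabla_{(j)}\Lc(\ykj^t)$ and apply Jensen's inequality (equivalently Cauchy--Schwarz) to this average, obtaining $\|\Hj^t\|^2 \le \frac{1}{K}\sum_{k=1}^K \|\nabla_{(j)}\Lc(\ykj^t)\|^2$. Summing both bounds over $j = 1,\dots,N$ then yields $N\sigma_{max}^2/K$ from the variance terms and exactly the claimed double sum from the mean terms.

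I do not expect a serious technical obstacle here; the only point demanding care is justifying the bias--variance split, namely confirming that $\Hj^t$ is $\gm^t$-measurable so the cross term is genuinely zero under $\ext$, and that the unbiasedness of Assumption~\ref{assumption.unbiased} is being invoked at the perturbed argument $\ykj^t$ rather than at $\gm^t$ itself. The main bookkeeping concern is to keep the expectation inside the per-block decomposition (so Lemma~\ref{lemma.variance} can be applied blockwise) without accidentally reintroducing inter-client cross terms, which are not needed for this particular bound.
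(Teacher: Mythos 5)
Your proposal is correct and follows essentially the same route as the paper's proof: the blockwise decomposition $\ext\|\bm{G}^t\|^2 = \sum_j \ext\|\Gj^t\|^2$, the bias--variance split $\ext\|\Gj^t\|^2 = \ext\|\Gj^t - \Hj^t\|^2 + \|\Hj^t\|^2$ justified by Assumption~\ref{assumption.unbiased}, Lemma~\ref{lemma.variance} for the variance term, and Jensen's inequality on the average $\Hj^t$ for the mean term. Your added care about the measurability of $\Hj^t$ under $\ext$ and about invoking unbiasedness at the perturbed argument $\ykj^t$ is a point the paper leaves implicit, but it does not change the argument.
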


\begin{proof}
\begin{align}
\ext \parallel\bm{G}^t\parallel^2 &= \sum\limits_{j=1}^N \ext \lVert \Gj^t \rVert^2 
    \\
    &\overset{(a)}{=} \sum\limits_{j=1}^N \ext \left [ \lVert \Gj^t - \ext [\Gj^t] \rVert^2 \right ]  +  \sum\limits_{j=1}^N \lVert \ext[\Gj^t] \rVert^2
    \\
    &= \sum\limits_{j=1}^N \ext \left [ \lVert \Gj^t - \Hj^t \rVert^2\right ]   +  \sum\limits_{j=1}^N \lVert \Hj^t \rVert^2
    \\
    &\overset{(b)}{\leq}\sum\limits_{j=1}^N \frac{\sigma_{max}^2}{2K} 
    + \sum\limits_{j=1}^N \lVert \frac{1}{K}\sum\limits_{k=1}^K \nabla_{(j)} \Lc (\ykj^t) \rVert^2 
    \\
    &\overset{(c)}{\leq} \frac{N \sigma_{max}^2}{K}  
    + \frac{1}{K} \sum\limits_{j=1}^N \sum\limits_{k=1}^K \lVert  \nabla_{(j)} \Lc(\ykj^t) \rVert^2 
     \label{eqn.mainequationT2}
\end{align}
where $(a)$ follows directly from Assumption~\ref{assumption.unbiased} and the definitions of $\bm{G}^t$ and $\bm{H}^t$.  The simplification in $(b)$ is from Lemma~\ref{lemma.variance}, and $(c)$ is because $\sum_{i=1}^N \parallel a_i \parallel^2 \leq \parallel \sum_{i=1}^N a_i \parallel^2 \leq N \sum_{i=1}^N \parallel a_i \parallel^2 $ .
\end{proof}

We next present a lemmas to lower bound $T_1$.
\begin{lemma}\label{lemma.one}
\begin{align}
\ext \left[ \langle \nabla \Lc(\gm^t) , \bm{G}^t \rangle \right]= 
\frac{1}{2} \lVert \nabla \Lc(\gm^t) \rVert^2 + \frac{1}{2K} \sum\limits_{j=1}^N \sum\limits_{k=1}^K \lVert  \nabla_{(j)} \Lc(\ykj^t )\rVert^2  - \frac{1}{2K} \sum\limits_{j=1}^N \sum\limits_{k=1}^K \lVert \nabla_{(j)} \Lc(\gm^t)- \nabla_{(j)} \Lc(\ykj^t) \rVert^2
\end{align}
\end{lemma}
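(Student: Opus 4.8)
The plan is to remove the randomness using unbiasedness, reduce the inner product to a blockwise sum, and then convert each blockwise inner product into squared norms via the polarization identity. First I would observe that $\nabla \Lc(\gm^t)$ is deterministic given $\gm^t$, so the expectation acts only on $\bm{G}^t$. By Assumption~\ref{assumption.unbiased}, $\ext[\gkj(\ykj^t)] = \nabla_{(j)} \Lc(\ykj^t)$ for every client $k$, and averaging over $k$ gives $\ext[\Gj^t] = \Hj^t$, hence $\ext[\bm{G}^t] = \bm{H}^t$. Therefore $\ext\langle \nabla \Lc(\gm^t), \bm{G}^t \rangle = \langle \nabla \Lc(\gm^t), \bm{H}^t \rangle$.

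Next I would decompose this inner product across the $N$ coordinate blocks. Writing both vectors in block form and substituting $\Hj^t = \tfrac{1}{K}\sum_{k=1}^K \nabla_{(j)} \Lc(\ykj^t)$, the inner product becomes $\tfrac{1}{K}\sum_{j=1}^N \sum_{k=1}^K \langle \nabla_{(j)} \Lc(\gm^t), \nabla_{(j)} \Lc(\ykj^t) \rangle$.

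The key step is to apply the polarization identity $\langle a, b \rangle = \tfrac{1}{2}\big(\|a\|^2 + \|b\|^2 - \|a-b\|^2\big)$ to each summand, with $a = \nabla_{(j)} \Lc(\gm^t)$ and $b = \nabla_{(j)} \Lc(\ykj^t)$. The $\|b\|^2$ and $\|a-b\|^2$ pieces directly yield the two double-sum terms appearing in the claimed identity. For the $\|a\|^2 = \|\nabla_{(j)} \Lc(\gm^t)\|^2$ piece, I would use that it does not depend on the client index $k$: summing over $k$ contributes a factor $K$ that cancels the $\tfrac{1}{K}$ prefactor, and then $\sum_{j=1}^N \|\nabla_{(j)} \Lc(\gm^t)\|^2 = \|\nabla \Lc(\gm^t)\|^2$ reassembles the full gradient norm, producing the leading $\tfrac{1}{2}\|\nabla \Lc(\gm^t)\|^2$. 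Collecting the three pieces gives the stated equality. There is no substantive obstacle here beyond recognizing that polarization is the right tool; the only care needed is in checking that the $k$-independence of the first piece collapses the double sum into the clean $\tfrac{1}{2}\|\nabla \Lc(\gm^t)\|^2$ term.
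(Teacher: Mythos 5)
Your proposal is correct and follows essentially the same route as the paper's proof: reduce $\ext[\bm{G}^t]$ to $\bm{H}^t$ via Assumption~\ref{assumption.unbiased}, expand blockwise into $\tfrac{1}{K}\sum_{j}\sum_{k}\langle \nabla_{(j)}\Lc(\gm^t), \nabla_{(j)}\Lc(\ykj^t)\rangle$, and apply the identity $\langle a,b\rangle = \tfrac{1}{2}(\|a\|^2 + \|b\|^2 - \|a-b\|^2)$, with the $k$-independent $\|\nabla_{(j)}\Lc(\gm^t)\|^2$ terms collapsing to $\tfrac{1}{2}\|\nabla\Lc(\gm^t)\|^2$. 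No gaps; the only cosmetic difference is that you take the expectation before decomposing into blocks, whereas the paper does these two steps in the opposite order.
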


\begin{proof}
We have:
\begin{align}
    & \ext \langle \nabla \Lc(\gm^t) , \bm{G}^t \rangle \label{eqn.mainequationT1} 
    \\
    &= \sum\limits_{j=1}^N \left \langle \nabla_{(j)} \Lc(\gm^t),\ext\left[\Gj^t\right]  \right \rangle 
    \\
    &\overset{(a)}{=} \sum\limits_{j=1}^N \left \langle \nabla_{(j)} \Lc(\gm^t), \bm{H}_{(j)}^t  \right \rangle 
    \\
    &= \frac{1}{K} \sum\limits_{j=1}^N \sum\limits_{k=1}^K \langle \nabla_{(j)} \Lc(\gm^t), \nabla_{(j)} \Lc(\ykj^t) \rangle 
    \\
    &= \frac{1}{2K} \sum\limits_{j=1}^N \sum\limits_{k=1}^K \left [ \lVert \nabla_{(j)} \Lc(\gm^t) \rVert^2 +  \lVert  \nabla_{(j)} \Lc(\ykj^t) \rVert^2 
    - \lVert \nabla_{(j)} \Lc(\gm^t)- \nabla_{(j)} \Lc(\ykj^t) \rVert^2 \right ]     
    \\
    &= \frac{1}{2} \lVert \nabla \Lc(\gm^t) \rVert^2 + \frac{1}{2K} \sum\limits_{j=1}^N \sum\limits_{k=1}^K \lVert  \nabla_{(j)} \Lc(\ykj^t) \rVert^2 
    - \frac{1}{2K} \sum\limits_{j=1}^N \sum\limits_{k=1}^K \lVert \nabla_{(j)} \Lc(\gm^t)- \nabla_{(j)} \Lc(\ykj^t) \rVert^2
    \label{eqn.mainequationT1end}
\end{align}
where, in $(a)$, we use Assumption~\ref{assumption.unbiased}.
\end{proof}

We now define a lemma to relate the expected local and expected global gradient.

\begin{lemma} \label{lemma.expected_local_global_gradient}
    \[
    \ext \| \gkj(\bm{\theta})\| ^2 \leq \sigma_j^2 
        + \lVert \nabla_{(j)} \Lc(\bm{\theta}) \rVert^2 
    \]
\end{lemma}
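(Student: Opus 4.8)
The plan is to invoke the standard bias--variance decomposition of the second moment, specialized to the stochastic block gradient $\gkj(\bm{\theta})$. The key observation is that Assumption~\ref{assumption.unbiased} supplies both ingredients the decomposition needs: that $\gkj(\bm{\theta})$ is an unbiased estimator of the true block gradient, $\ext[\gkj(\bm{\theta})] = \pdj \Lc(\bm{\theta}) = \nabla_{(j)} \Lc(\bm{\theta})$, and that its fluctuation around that mean has second moment bounded by $\sigma_j^2$.

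First I would split $\gkj(\bm{\theta}) = \left(\gkj(\bm{\theta}) - \nabla_{(j)}\Lc(\bm{\theta})\right) + \nabla_{(j)}\Lc(\bm{\theta})$ and expand $\ext\|\gkj(\bm{\theta})\|^2$ into three terms: the expected squared norm of the fluctuation, the squared norm of the deterministic mean $\nabla_{(j)}\Lc(\bm{\theta})$, and the cross term $2\ext\langle \gkj(\bm{\theta}) - \nabla_{(j)}\Lc(\bm{\theta}),\, \nabla_{(j)}\Lc(\bm{\theta})\rangle$. Because $\nabla_{(j)}\Lc(\bm{\theta})$ does not depend on the minibatch, I can pull it out of the expectation in the cross term, leaving the factor $\ext[\gkj(\bm{\theta})] - \nabla_{(j)}\Lc(\bm{\theta})$, which is zero by unbiasedness.

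Finally I would bound the remaining fluctuation term $\ext\|\gkj(\bm{\theta}) - \nabla_{(j)}\Lc(\bm{\theta})\|^2$ directly by $\sigma_j^2$ via the variance bound in Assumption~\ref{assumption.unbiased}, yielding $\ext\|\gkj(\bm{\theta})\|^2 \leq \sigma_j^2 + \|\nabla_{(j)}\Lc(\bm{\theta})\|^2$. There is no substantive obstacle here; the only point requiring care is the handling of the cross term, namely that the determinism of the true gradient lets it exit the expectation so that unbiasedness annihilates the term --- precisely the same mechanism by which the off-diagonal inner products collapsed in the proof of Lemma~\ref{lemma.variance}.
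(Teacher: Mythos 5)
Your proposal is correct and matches the paper's proof: the paper applies the same bias--variance decomposition $\ext\|\gkj(\bm{\theta})\|^2 = \ext\|\gkj(\bm{\theta}) - \ext[\gkj(\bm{\theta})]\|^2 + \|\ext[\gkj(\bm{\theta})]\|^2$, substitutes $\ext[\gkj(\bm{\theta})] = \pdj\Lc(\bm{\theta})$ by unbiasedness, and bounds the variance term by $\sigma_j^2$ via Assumption~\ref{assumption.unbiased}. The only cosmetic difference is that you derive the decomposition by expanding the square and killing the cross term explicitly, whereas the paper invokes the identity directly.
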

\begin{proof}
We observe that
\highlight{
\begin{align}
\ext \| \gkj(\bm{\theta}) \|^2 &= \ext \| \gkj(\bm{\theta}) - \ext[\gkj(\bm{\theta})] \|^2 + \| \ext[\gkj(\bm{\theta})]\|^2 \\
&=\ext \|\gkj(\bm{\theta})  - \pdj \Lc(\bm{\theta}) \|^2 + \| \pdj \Lc(\bm{\theta}) \|^2\\
&\leq \sigma^2_j + \| \pdj \Lc(\bm{\theta}) \|^2
\end{align}
}
where the last inequality follows from Assumption~\ref{assumption.unbiased}.
\end{proof}

\subsection{PROOF OF THEOREM~\ref{thm.main_theorem}}

We now prove our main result. We return to the expression in (\ref{t1.eq}),

\begin{align}
    \ext[\Lc(\gm^{t+1})] - \Lc(\gm^t)
    &\leq 
        - \eta ~ \ext\langle \nabla \Lc(\gm^t) , \bm{G}^t \rangle 
        + \frac{\eta^2 L}{2} ~  \ext ~\| \bm{G}^t \|^2 
\end{align}
From Lemmas~\ref{lemma.two} and \ref{lemma.one}, we now have
\begin{align}
&\ext[\Lc(\gm^{t+1})] - \Lc(\gm^t) 
\leq \\
&- \frac{\eta}{2} \lVert \nabla \Lc(\gm^t) \rVert^2 - \frac{\eta}{2K} \sum\limits_{j=1}^N \sum\limits_{k=1}^K \lVert  \nabla_{(j)} \Lc(\ykj^t )\rVert^2 
 + \frac{\eta}{2K} \sum\limits_{j=1}^N \sum\limits_{k=1}^K \lVert \nabla_{(j)} \Lc(\gm^t)- \nabla_{(j)} \Lc(\ykj^t) \rVert^2 \\
    &+ \frac{\eta^2 L}{2K} \sum\limits_{j=1}^N \sum\limits_{k=1}^K \lVert  \nabla_{(j)} \Lc(\ykj^t) \rVert^2 + \frac{\eta^2 L N \sigma_{max}^2}{2K}  \\
\leq& - \frac{\eta}{2} \lVert \nabla \Lc(\gm^t) \rVert^2 - \frac{\eta}{2K}(1 - \eta L) \sum\limits_{j=1}^N \sum\limits_{k=1}^K \lVert  \nabla_{(j)} \Lc(\ykj^t )\rVert^2
 + \frac{\eta L_{max}^2}{2K} \sum\limits_{j=1}^N \sum_{k=1}^K 
    \lVert  \gm^t -  \ykj^t \rVert^2 \label{t2.eq} 
    \\
    & + \frac{\eta^2 L N \sigma_{max}^2}{2K} 
\end{align}
where (\ref{t2.eq}) follows from Assumption~\ref{assumption.lipschitz}.
Rearranging, we get
\begin{align}
 \lVert \nabla \Lc(\gm^t) \rVert^2 \leq& \frac{2}{\eta}\left(\Lc(\gm^t) - \ext[\Lc(\gm^{t+1})]\right)  + \frac{ L_{max}^2}{K} \sum\limits_{j=1}^N \sum_{k=1}^K 
    \lVert  \gm^t -  \ykj^t \rVert^2 + \frac{\eta L N \sigma_{max}^2}{K} \\
    &- \frac{1}{K}(1 - \eta L) \sum\limits_{j=1}^N \sum\limits_{k=1}^K \lVert  \nabla_{(j)} \Lc(\ykj^t )\rVert^2
\end{align}

We now take the total expectation and average all iterates from $t=0, \ldots, T$ 
\begin{align}
\ex\left[ \frac{1}{T} \sum_{t=0}^{T-1} \lVert \nabla \Lc(\gm^t) \rVert^2 \right] \leq& \frac{2}{\eta T}\left(\Lc(\gm^0) - \Lc_{inf}\right)  + \frac{ L_{max}^2}{TK} \sum_{t=0}^{T-1} \sum\limits_{j=1}^N \sum_{k=1}^K 
   \ex \lVert  \gm^t -  \ykj^t \rVert^2 + \frac{\eta L N \sigma_{max}^2}{K} \\
    &- \frac{1}{TK}(1 - \eta L) \sum_{t=0}^{T-1} \sum\limits_{j=1}^N \sum\limits_{k=1}^K \ex \lVert  \nabla_{(j)} \Lc(\ykj^t )\rVert^2 \label{b1.eq}
\end{align}

We use the following lemmas to simplify the expression above.

\begin{lemma} \label{lemma.intermediate_helping}
Let $t_0$ be the most recent iteration in which the hubs exchanged information and sent new models to the clients prior to iteration $t$. Then
\begin{align}
    \frac{1}{K}  \sum_{k=1}^K  \|  \gm^t_{(j)} -  \bm{y}_{k,j, (j)}^t \|^2 
    \leq \frac{Q\eta^2}{K}  \sum_{k=1}^K \sum\limits_{\tau=t_0}^{t-1}  \| \gkj(\ykj^{\tau}) \|^2 
\end{align} 
\end{lemma}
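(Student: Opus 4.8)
The plan is to unroll the per-client local updates back to the most recent synchronization point $t_0$ and then combine a Jensen/Cauchy--Schwarz step with a bias--variance identity. First I would make two bookkeeping observations. Since $\ykj^t = [\tjminus, \tjk^t]$, projecting onto block $j$ shows that the $(j)$-block of $\ykj^t$ is exactly the local copy $\tjk^t$; and by the definition of the virtual global model, $\gm_{(j)}^t = \frac{1}{K}\sum_{l=1}^K \theta_{l,j}^t$. Hence each summand on the left is the squared deviation of client $k$'s local iterate from the client-average,
\[
\gm_{(j)}^t - \bm{y}_{k,j,(j)}^t = \frac{1}{K}\sum_{l=1}^K \theta_{l,j}^t - \tjk^t .
\]

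Next I would unroll the local recursion $\theta_{k,j}^{\tau+1} = \theta_{k,j}^{\tau} - \eta\, \gkj(\ykj^{\tau})$ together with its client-average $\gm_{(j)}^{\tau+1} = \gm_{(j)}^{\tau} - \eta\, \Gj^{\tau}$ back to iteration $t_0$. The key structural fact is that at a synchronization iteration the hubs push a fresh common model to all clients in the silo, so $\theta_{k,j}^{t_0} = \gm_{(j)}^{t_0}$ for every $k$ and the telescoped initial deviation vanishes. This leaves the accumulated difference of the local gradient and the averaged gradient,
\[
\gm_{(j)}^t - \tjk^t = \eta \sum_{\tau=t_0}^{t-1}\left( \gkj(\ykj^{\tau}) - \Gj^{\tau}\right).
\]

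Finally I would bound the averaged squared norm. Because $t_0$ is the most recent synchronization and the averaging period is $Q$, the inner sum has at most $Q$ terms, so Cauchy--Schwarz (equivalently Jensen) gives $\big\| \sum_{\tau=t_0}^{t-1} \bm{a}_\tau \big\|^2 \le Q \sum_{\tau=t_0}^{t-1} \|\bm{a}_\tau\|^2$ with $\bm{a}_\tau = \gkj(\ykj^{\tau}) - \Gj^{\tau}$. Averaging over $k$ and swapping the order of summation, for each fixed $\tau$ the vector $\Gj^{\tau}$ is precisely the mean over $k$ of the $\gkj(\ykj^{\tau})$, so the bias--variance identity yields
\[
\frac{1}{K}\sum_{k=1}^K \| \gkj(\ykj^{\tau}) - \Gj^{\tau} \|^2 = \frac{1}{K}\sum_{k=1}^K \| \gkj(\ykj^{\tau}) \|^2 - \| \Gj^{\tau} \|^2 \le \frac{1}{K}\sum_{k=1}^K \| \gkj(\ykj^{\tau}) \|^2 .
\]
Assembling these two steps reproduces exactly the stated bound.

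I expect the main obstacle to be the two bookkeeping observations rather than the estimates themselves: correctly using synchronization to conclude that $\gm_{(j)}^{t_0}$ and each $\tjk^{t_0}$ coincide (so the initial term of the telescope drops), and recognizing that the right-hand side carries the uncentered $\|\gkj(\ykj^{\tau})\|^2$ rather than $\|\gkj(\ykj^{\tau}) - \Gj^{\tau}\|^2$ — which is legitimate only through the variance-drops-the-mean inequality in the last display. Everything else is the standard local-SGD client-drift computation.
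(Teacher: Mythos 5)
Your proposal is correct and is essentially the paper's own argument: both proofs hinge on the synchronization fact $\bm{\theta}_{k,j}^{t_0} = \gm_{(j)}^{t_0}$, a Cauchy--Schwarz bound over the at most $Q$ local steps since $t_0$, and the variance-is-at-most-second-moment inequality that discards the client average. The only difference is the order of the last two steps: the paper centers the accumulated drifts $\bm{\theta}_{k,j}^t - \gm_{(j)}^{t_0}$ first and then applies Cauchy--Schwarz to each client's raw gradient sums, whereas you apply Cauchy--Schwarz to the centered per-step differences $\gkj(\ykj^{\tau}) - \Gj^{\tau}$ and then drop the mean at each time step; both orderings yield exactly the stated bound.
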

\begin{proof}
\begin{align}
 \frac{1}{K}  \sum_{k=1}^K \|  \gm^t_{(j)} -  \bm{y}_{k,j, (j)}^t \|^2 &= \frac{1}{K}  \sum_{k=1}^K \parallel \gmj^t - \bm{\theta}_{k,j}^t \parallel^2 \\
 &=\frac{1}{K}  \sum_{k=1}^K \parallel  \bm{\theta}_{k,j}^t - \gmj^{t_0} - (\gmj^t - \gmj^{t_0}) \parallel^2 \\
 &= \left\| \left(\bm{\theta}_{k,j}^t - \gmj^{t_0}\right) - \left(\frac{1}{K} \sum_{l=1}^K (\bm{\theta}_{l,j}^t - \gmj^{t_0}) \right) \right\|^2
\end{align}

We observe that, for an arbitrary set of vectors $\{\bm{z}_l\}_{k=1}^K$,
\begin{align}
\frac{1}{K}  \sum_{k=1}^K \left\| \bm{z}_k - \frac{1}{K} \sum_{l=1}^K \bm{z}_l \right\|^2 \leq \frac{1}{K}  \sum_{k=1}^K \| \bm{z}_k\|^2.
\end{align}
Letting $\bm{z}_l = \bm{\theta}_{l,j}^t - \gmj^{t_0}$, 
we have
\begin{align}
 \frac{1}{K}  \sum_{k=1}^K \|  \gm^t_{(j)} -  \bm{y}_{k,j, (j)}^t \|^2 &\leq 
 \frac{1}{K}  \sum_{k=1}^K \left\| \bm{\theta}_{k,j}^t - \gmj^{t_0}\right\|^2 \\
    &\leq \frac{1}{K}  \sum_{k=1}^K (t-1-t_0) \eta^2 \sum\limits_{\tau=t_0}^{t-1} \parallel  \gkj(\ykj^{\tau}) \parallel^2 \\
    &\leq \frac{Q\eta^2}{K}  \sum_{k=1}^K \sum\limits_{\tau=t_0}^{t-1} \parallel \gkj(\ykj^{\tau}) \parallel^2.
\end{align}
\end{proof}

\begin{lemma}\label{lemma.networkerror}
\begin{align}
&  \frac{L_{max}^2}{K} \sum\limits_{j=1}^N \sum_{k=1}^K 
    \lVert  \gm^t -  \ykj^t \rVert^2 
 \leq \frac{\eta^2 L_{max}^2 Q N}{K} 
    \left [ \sum\limits_{j=1}^N  \sum\limits_{\tau = t_0}^{t-1}\sum\limits_{p=1}^{K} \parallel g_{p,j}(\bm{y}_{p,j}^{\tau}) \parallel^2 \right] 
\end{align}
\begin{proof}
\begin{align}
    &  \frac{L_{max}^2}{K} \sum\limits_{j=1}^N \sum_{k=1}^K 
    \lVert  \gm^t -  \ykj^t \rVert^2  \label{explanation.0} 
    \\
    & = \frac{L_{max}^2}{K} \sum\limits_{j=1}^N \sum_{k=1}^K
    \left[ \sum\limits_{l \neq j}^N  \lVert  \gm^t_{(l)} - \gm^{t_0}_{(l)} \rVert^2 + \lVert  \gm^t_{(j)} -  \bm{y}_{k,j, (j)}^t \rVert^2 \right ] \label{explanation.1} 
    \\
    & = \frac{L_{max}^2}{K} \sum\limits_{j=1}^N \sum_{k=1}^K
    \left [ \sum\limits_{l \neq j}^N  \lVert  \gm^{t_0}_{(l)} - \frac{\eta}{K} \sum\limits_{\tau = t_0}^{t-1}\sum\limits_{p=1}^{K} g_{p,l}(\bm{y}_{p,l}^{\tau}) 
     - \gm^{t_0}_{(l)} \rVert^2 + \lVert  \gm^t_{(j)} -  \bm{y}_{k,j, (j)}^t \rVert^2 \right ]
\\
    &{\leq} \frac{L_{max}^2}{K} \sum\limits_{j=1}^N \sum_{k=1}^K  \frac{\eta^2}{K} (t-1-t_0) \sum\limits_{l \neq j}^N  \sum\limits_{\tau = t_0}^{t-1}\sum\limits_{p=1}^{K} \lVert  g_{p,l}(\bm{y}_{p,l}^{\tau}) \rVert^2
     + \frac{L_{max}^2}{K} \sum\limits_{j=1}^N \sum_{k=1}^K \lVert  \gm^t_{(j)} -  \bm{y}_{k,j, (j)}^t \rVert^2 \label{equation.c}
       \\
    &{\leq} \frac{\eta^2 L_{max}^2 Q }{K^2} \sum\limits_{j=1}^N \sum_{k=1}^K \sum\limits_{l \neq j}^N  \sum\limits_{\tau = t_0}^{t-1}\sum\limits_{p=1}^{K} \lVert  g_{p,l}(\bm{y}_{p,l}^{\tau}) \rVert^2
     + \frac{L_{max}^2}{K} \sum\limits_{j=1}^N \sum_{k=1}^K \lVert  \gm^t_{(j)} -  \bm{y}_{k,j, (j)}^t \rVert^2 \label{equation.d}
     \\
    &= \frac{\eta^2 L_{max}^2 Q (N-1) }{K} \sum\limits_{j=1}^N  \sum\limits_{\tau = t_0}^{t-1}\sum\limits_{p=1}^{K} \lVert  g_{p,j}(\bm{y}_{p,j}^{\tau}) \rVert^2
     + \frac{L_{max}^2}{K} \sum\limits_{j=1}^N \sum_{k=1}^K \lVert  \gm^t_{(j)} -  \bm{y}_{k,j, (j)}^t \rVert^2   
     \\
     &{\leq} \frac{\eta^2 L_{max}^2 Q (N-1) }{K} \sum\limits_{j=1}^N  \sum\limits_{\tau = t_0}^{t-1}\sum\limits_{p=1}^{K} \lVert  g_{p,j}(\bm{y}_{p,j}^{\tau}) \rVert^2
     + L_{max}^2 \sum\limits_{j=1}^N \frac{Q\eta^2}{K}  \sum_{k=1}^K \sum\limits_{\tau=t_0}^{t-1} \parallel  \gkj(\ykj^{\tau}) \parallel^2 \label{equation.e}
\\
&= \frac{\eta^2 L_{max}^2 Q N }{K} \sum\limits_{j=1}^N  \sum\limits_{\tau = t_0}^{t-1}\sum\limits_{p=1}^{K} \lVert  g_{p,j}(\bm{y}_{p,j}^{\tau}) \rVert^2. \label{eqn.network_error_csgd}
\end{align}
Here, in (\ref{explanation.1}) we use the fact that each hub sent the updated model to its clients in iteration $t_0$, where $t-t_0 \leq Q$.  In (\ref{equation.c}), we use the inequality $\parallel \sum_{i=1}^N a_i \parallel^2 \leq N \sum_{a=1}^N \parallel a_i \parallel^2 $ and in (\ref{equation.d}), we use the fact that $Q\geq t-1-t_0$.
Finally, in (\ref{equation.e}) we use Lemma~\ref{lemma.intermediate_helping}.
\end{proof}
\end{lemma}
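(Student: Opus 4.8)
The plan is to split the aggregate network error $\|\gm^t - \ykj^t\|^2$ block by block, exploiting the structure of the local parameter vector held at client $k$ in silo $j$. By construction this client retains the \emph{stale} (frozen) value $\gm^{t_0}_{(l)}$ of every foreign block $l \neq j$, synchronized only at the last exchange iteration $t_0$, while carrying its own \emph{locally drifting} iterate for block $j$. Consequently the squared norm decomposes cleanly as $\|\gm^t - \ykj^t\|^2 = \sum_{l\neq j}\|\gm^t_{(l)} - \gm^{t_0}_{(l)}\|^2 + \|\gm^t_{(j)} - \bm{y}_{k,j,(j)}^t\|^2$, separating an inter-silo staleness error (foreign blocks) from an intra-silo drift error (the local block). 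I would bound these two contributions by different means and then recombine.

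For the staleness term I would unroll the virtual global recursion $\gm^{t+1} = \gm^t - \eta \bm{G}^t$ across the window $[t_0,t-1]$, writing $\gm^t_{(l)} - \gm^{t_0}_{(l)} = -\frac{\eta}{K}\sum_{\tau=t_0}^{t-1}\sum_{p=1}^K g_{p,l}(\bm{y}_{p,l}^{\tau})$. Taking squared norms and applying the standard inequality $\|\sum_i a_i\|^2 \le M \sum_i\|a_i\|^2$ to the summation over $\tau$ (together with the averaging bound over the $K$ clients) turns this into a sum of squared per-client gradient norms scaled by the window length, and bounding that length via $t - t_0 \le Q$ produces the $\eta^2 L_{max}^2 Q$ prefactor. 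For the drift term I would invoke Lemma~\ref{lemma.intermediate_helping} verbatim, since it already supplies $\frac{1}{K}\sum_k \|\gm^t_{(j)} - \bm{y}_{k,j,(j)}^t\|^2 \le \frac{Q\eta^2}{K}\sum_k\sum_{\tau}\|\gkj(\ykj^{\tau})\|^2$.

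The final and most error-prone step is the combinatorial bookkeeping that pins down the constant. After the staleness bound the summand no longer depends on $k$, so the outer $\sum_k$ contributes a factor $K$ that cancels one surviving power of $1/K$; more importantly, the nested sum $\sum_{j=1}^N\sum_{l\neq j}$ counts each block index exactly $N-1$ times, so after relabeling $l\to j$ it collapses to an $(N-1)$ multiplier on $\sum_j\sum_{\tau}\sum_p\|g_{p,j}(\bm{y}_{p,j}^{\tau})\|^2$. The drift term then adds one further identical copy, and $(N-1)+1 = N$ yields the stated constant $\frac{\eta^2 L_{max}^2 Q N}{K}$. I expect the main obstacle to be exactly this accounting: correctly tracking which factors of $1/K$ survive the client averaging (Jensen) versus the client summation, and recognizing that it is the off-diagonal double sum over silos that produces $N-1$, which only becomes the full $N$ once the own-block drift term is reinstated.
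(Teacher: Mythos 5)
Your proposal matches the paper's proof essentially step for step: the same block decomposition into stale foreign blocks versus the local drifting block, the same unrolling of the virtual recursion $\gm^{t+1} = \gm^t - \eta\bm{G}^t$ over $[t_0, t-1]$ with the norm-of-sum inequality and the bound $t - t_0 \leq Q$, the same invocation of Lemma~\ref{lemma.intermediate_helping} for the own-block term, and the same $(N-1)+1 = N$ accounting for the final constant. Your bookkeeping of the $1/K$ factors (Jensen over the client average versus cancellation of the outer $\sum_k$ once the summand is $k$-independent) is exactly what the paper does in (\ref{equation.c})--(\ref{equation.e}), so the argument is correct as proposed.
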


Applying Lemma~\ref{lemma.networkerror}, we can further bound (\ref{b1.eq}) as
\begin{align}
\ex\left[ \frac{1}{T} \sum_{t=0}^{T-1} \lVert \nabla \Lc(\gm^t) \rVert^2 \right] \leq&\frac{2}{\eta T}\left(\Lc(\gm^0) - \Lc_{inf}\right)  +  \frac{\eta^2 L_{max}^2 Q N }{TK} \sum_{t=0}^{T-1} \sum\limits_{j=1}^N  \sum\limits_{\tau = t_0}^{t-1}\sum\limits_{p=1}^{K}\ex \lVert  g_{p,j}(\bm{y}_{p,j}^{\tau}) \rVert^2\\
&+ \frac{\eta L N \sigma_{max}^2}{K} 
    - \frac{1}{TK}(1 - \eta L) \sum_{t=0}^{T-1} \sum\limits_{j=1}^N \sum\limits_{k=1}^K \ex \lVert  \nabla_{(j)} \Lc(\ykj^t )\rVert^2
    \\
\leq &\frac{2}{\eta T}\left(\Lc(\gm^0) - \Lc_{inf}\right)
 + \frac{\eta^2 L_{max}^2 Q^2 N}{TK} \sum\limits_{t=0}^{T-1}\sum\limits_{j=1}^N  \sum\limits_{p=1}^{K} \ex \lVert  g_{p,j}(\bm{y}_{p,j}^{t}) \rVert^2 \label{eqn.doublesummation_over_time} \\
   &   + \frac{\eta L N \sigma_{max}^2}{K}
       - \frac{1}{TK}(1 - \eta L) \sum_{t=0}^{T-1} \sum\limits_{j=1}^N \sum\limits_{k=1}^K \ex \lVert  \nabla_{(j)} \Lc(\ykj^t )\rVert^2 
       \\
       \leq &\frac{2}{\eta T}\left(\Lc(\gm^0) - \Lc_{inf}\right)
 + \frac{\eta^2 L_{max}^2 Q^2 N}{TK} \sum\limits_{t=0}^{T-1}\sum\limits_{j=1}^N  \sum\limits_{p=1}^{K} \left(\sigma_j^2 
        + \highlight{ \ex }\lVert \nabla_{(j)} \Lc(\bm{y}_{p,j}^t) \rVert^2 \right) \\
   &   + \frac{\eta L N \sigma_{max}^2}{K}
       - \frac{1}{TK}(1 - \eta L) \sum_{t=0}^{T-1} \sum\limits_{j=1}^N \sum\limits_{k=1}^K \ex \lVert  \nabla_{(j)} \Lc(\ykj^t )\rVert^2 \label{varbound1.eq} 
       \\
       {\leq}& \frac{2}{\eta T}\left(\Lc(\gm^0) - \Lc_{inf}\right) + \frac{\eta L N \sigma_{max}^2}{K} 
         + L_{max}^2 \eta^2\sigma_{max}^2 Q^2 N^2
\\
&~~~ - \frac{1}{TK} \left[ 1 - \eta L - \eta^2 L^2_{max} Q^2 \right]\sum\limits_{t=0}^{T-1} \sum\limits_{j=1}^N \sum\limits_{k=1}^K \ex \lVert  \nabla_{(j)} \Lc(y_{k,j}^t) \rVert^2
\end{align}
where we can simplify the double summation in (\ref{eqn.doublesummation_over_time}) because $t-1 - t_0 \leq Q$, and in (\ref{varbound1.eq}), we apply Lemma~\ref{lemma.expected_local_global_gradient}.
\linebreak
\linebreak
Assuming $\eta$ is chosen so that $1 - \eta L - \eta^2 L^2_{max} Q^2 \geq 0$,
we have
\begin{align}
\ex \left[\frac{1}{T} \sum\limits_{t=0}^{T-1}\lVert \nabla \Lc(\gm^t) \rVert^2 \right ] \leq \frac{2}{\eta T}\left(\Lc(\gm^0) - \Lc_{inf}\right)   + \frac{\eta L N \sigma_{max}^2}{K} 
        + L_{max}^2 \eta^2\sigma_{max}^2 Q^2 N^2 \label{eqn.final_convergence_rate}
\end{align}
This completes the proof. $\QEDB$



\end{document}